\title{Optimal Collusion-Free Teaching\thanks{This is an extended version of~\citep{KSZ2019}}}
 \author{\name David Kirkpatrick \email kirk@cs.ubc.ca \\
 \addr Department of Computer Science, University of British Columbia
 \\
 \AND
 \name Hans U. Simon \email hans.simon@rub.de\\
 \addr Department of Mathematics, Ruhr-University Bochum
 \\
 \AND
 \name Sandra Zilles \email zilles@cs.uregina.ca\\
 \addr Department of Computer Science, University of Regina
 }
\newcommand{\PBTD}{\mathrm{PBTD}}
\newcommand{\TD}{\mathrm{TD}}
\newcommand{\RTD}{\mathrm{RTD}}
\newcommand{\dom}{\mathrm{dom}}
\newcommand{\cX}{{\mathcal{X}}}
\newcommand{\cS}{{\mathcal{S}}}
\newcommand{\cC}{{\mathcal{C}}}
\newcommand{\cP}{{\mathcal{P}}}
\newcommand{\seq}{\subseteq}
\newcommand{\eset}{\emptyset}
\newcommand{\VCD}{\mbox{$\mathrm{VCD}$}}
\newcommand{\NCTD}{\mathrm{NCTD}}
\newcommand{\ANCTD}{\mathrm{ANCTD}}
\newcommand{\CN}{\mathrm{CN}}
\newcommand{\TC}{\mathrm{ord}}
\begin{document}



\maketitle

\begin{abstract}
Formal models of learning from teachers need to respect certain criteria to avoid collusion. The most commonly accepted notion of collusion-freeness was proposed by \cite{GoldmanM96}, and various teaching models obeying their criterion have been studied. For each model $M$ and each concept class $\cC$, a parameter $M$-$\TD(\cC)$ refers to the \emph{teaching dimension} of concept class $\cC$ in model $M$---defined to be the number of examples required for teaching a concept, in the worst case over all concepts in $\cC$.

This paper introduces a new model of teaching, called no-clash teaching, together with the corresponding parameter $\NCTD(\cC)$. No-clash teaching is provably optimal in the strong sense that, given \emph{any}\/ concept class $\cC$ and \emph{any}\/ model $M$ obeying Goldman and Mathias's collusion-freeness criterion, one obtains $\NCTD(\cC)\le M$-$\TD(\cC)$. We also study a corresponding notion $\NCTD^+$ for the case of learning from positive data only, establish useful bounds on $\NCTD$ and $\NCTD^+$, and discuss relations of these parameters to the VC-dimension and to sample compression.

In addition to formulating an optimal model of collusion-free teaching, 
our main results are on the computational complexity of deciding whether $\NCTD^+(\cC)=k$ (or $\NCTD(\cC)=k$) for given $\cC$ and $k$. We show some such decision problems to be equivalent to 
the existence question for certain constrained matchings in bipartite
graphs. Our NP-hardness results for the latter are of independent interest in the study of constrained graph matchings.
\end{abstract}

\begin{keywords}
machine teaching, constrained graph matchings, sample compression
\end{keywords}

\section{Introduction}

%

Models of machine learning from carefully chosen examples, i.e., from teachers, have gained increased interest in recent years, due to various application areas, such as robotics \citep{ArgallCVB09}, trustworthy AI \citep{ZhuSZR18}, and pedagogy \citep{ShaftoGG14}. Machine teaching is also related to inverse reinforcement learning \citep{HoLMCA16}, to sample compression \citep{MSWY2015,DFSZ2014}, and to curriculum learning \citep{BengioLCW09}. The paper at hand is concerned with abstract notions of teaching, as studied in computational learning theory.

A variety of formal models of teaching have been proposed in the literature, for example, the classical teaching dimension model \citep{GK1995}, the optimal teacher model \citep{B2008}, recursive teaching \citep{ZLHZ2011}, 
or preference-based teaching \citep{GaoRSZ17}. 

In each of these models, a mapping $T$ (the \emph{teacher}\/) assigns a finite set $T(C)$ of correctly labelled examples to a concept $C$ in a concept class $\mathcal{C}$ in a way that the learner can reconstruct $C$ from $T(C)$.
Intuitively, unfair collusion between the teacher and the learner should not be allowed in any formal model of teaching. 
For example, one would not want the teacher and learner to agree on a total order over the domain and a total order over the concept class and then to simply use the $i$th instance in the domain for teaching the $i$th concept, irrespective of the actual structure of the concept class.

However, there is no general definition of what constitutes  collusion, and of what constitutes desirable or undesirable forms of learning. In this manuscript, we focus on a notion of collusion that was proposed by \cite{GoldmanM96} and that has been adopted by the majority of teaching models studied in the literature. In a nutshell, Goldman and Mathias's model demands that, (i) the examples in $T(C)$ are labelled consistently with $C$, and (ii) if the learner correctly identifies $C$ from $T(C)$, then it will also identify $C$ from any superset $S$ of $T(C)$ as long as the sample set $S$ remains consistent with $C$. In other words, adding more information about $C$ to $T(C)$ will not divert the learner to an incorrect hypothesis.

Most existing abstract models of machine teaching are collusion-free in this sense. Historically, some of these models were designed in order to overcome weaknesses of the previous models. For example, the optimal teacher model by \cite{B2008} is designed to overcome limitations of the classical teaching dimension model, and was likewise superseded by the recursive teaching model \citep{ZLHZ2011}. The latter again was inapplicable to many interesting infinite concept classes, which gave rise to the model of preference-based teaching \citep{GaoRSZ17}. Each model strictly dominates the previous one in terms of the \emph{teaching complexity}\/, i.e., the worst-case number of examples needed for teaching a concept in the underlying concept class $\cC$. In this context, one quite natural question has been ignored in the literature to date: what is the smallest teaching complexity that can be achieved under Goldman and Mathias's condition of collusion-freeness? This is exactly the question addressed in this paper.

Our first contribution is the formal definition of a collusion-free teaching model that has, for every concept class $\cC$, the provably smallest teaching complexity among all collusion-free teaching models. We call this model \emph{no-clash teaching}, since its core property, which turns out to be characteristic for collusion-freeness, requires that no pair of concepts are consistent with the union of their teaching sets. A similar property was used once in the literature in the context of sample compression schemes \citep{KW2007}, and dubbed the \emph{non-clashing} property. 

For example, consider a concept class (i.e., set system) $\cC$ over the instance space $\{1,2,3,4\}$, consisting of the four concepts of the form $\{i, (i+1)\bmod 4\}$ for $1\le i\le 4$. Then no-clash teaching is possible by assigning the singleton set $\{(i,1)\}$ (interpreted as the information ``$i$ belongs to  the target concept'') as a teaching set to the concept $\{i, (i+1)\bmod 4\}$; no two distinct concepts are consistent with the union of their assigned teaching sets. Thus, in the no-clash setting, each concept in $\cC$ can be taught with a single example. By comparison, consider the classical teaching dimension model, in which a teaching set for a given concept is required to be inconsistent with all other concepts in the concept class \citep{GK1995}. It is not hard to see that, under such constraints, no concept in $\cC$ can be taught with a single example; a smallest teaching set for concept $\{i, (i+1)\bmod 4\}$ would then be $\{(i,1), ((i+1)\bmod 4,1)\}$.

We call the worst-case number of examples needed for non-clashing teaching of any concept $C$ in a given concept class $\cC$ the \emph{no-clash teaching dimension}\/ of $\cC$, abbreviated $\NCTD(\cC)$, and we study a variant $\NCTD^+(\cC)$ in which teaching uses only positive examples. In the example above, $\NCTD=\NCTD^+=1$, while the classical teaching dimension is $2$.

The value $\NCTD(\cC)$ being the smallest collusion-free teaching complexity parameter of $\cC$ makes it interesting for several reasons. 

(1) $\NCTD$ represents the limit of data efficiency in teaching when obeying Goldman and Mathias's notion of collusion-freeness. Therefore the study of $\NCTD$ has the potential to further our understanding how collusion-freeness constrains teaching. It will also help to compare other notions of collusion-freeness (see, e.g., \citep{ZLHZ2011}) to that of Goldman and Mathias.

(2) An open question in computational learning theory is whether the VC-dimension ($\VCD$), \citep{VC1971}, 
which characterizes the sample complexity of learning from randomly chosen examples, also characterizes teaching complexity for some reasonable notion of teaching. Recently, the first strong connections between teaching and $\VCD$ were established, culminating in an upper bound on the recursive teaching dimension ($\RTD$) that is quadratic in $\VCD$ \citep{HuWLW17}, but it remains open whether this bound can be improved to be linear in $\VCD$. Obviously, now $\NCTD$ is a much stronger candidate for a linear relationship with $\VCD$ than $\RTD$ is. In fact, there is no concept class known yet for which $\NCTD$ exceeds $\VCD$.

(3) The problem of relating teaching complexity to $\VCD$ is connected to the famous open problem of determining whether $\VCD$ is an upper bound on the size of the smallest possible sample compression scheme \citep{LW1986,FW1995} of a concept class. Some interesting relations between sample compression and teaching have been established for $\RTD$ \citep{MSWY2015,DFSZ2014,DarnstadtKSZ16}. The study of $\NCTD$ can potentially strengthen such relations.

In addition, an important contribution of our paper is to link $\NCTD$ to the extensively developed theory of constrained graph matching. We show that the question whether $\NCTD^+=1$ is equivalent to a very natural constrained bipartite matching problem which has apparently not yet been studied in the literature. We proceed by proving that this particular matching problem is NP-hard---a result that generalizes to larger values of $\NCTD^+$ as well as to $\NCTD$. By comparison, the question whether $\RTD^+=1$ or $\RTD=1$ can be answered in 
linear time.

To sum up, our new notion of optimal collusion-free teaching is of relevance to the study of important open problems in computational learning theory as well as of fundamental graph-theoretic decision problems, and therefore appears to be worth studying in more detail.

\section{Preliminaries}

Given a domain $\cX$, a concept over $\cX$ is a subset $C\subseteq\cX$, and we usually denote by $\cC$ a \emph{concept class}\/ over $\cX$, i.e., a set of concepts over $\cX$. 
Implicitly, we identify a concept $C$ over $\cX$ with a mapping $C:\cX\rightarrow\{0,1\}$, where $C(x)=1$ iff $x\in C$. By $\VCD(\cC)$, we denote the VC-dimension of $\cC$. 

A labelled example is a pair $(x,\ell)\in\cX\times\{0,1\}$, and it is consistent with a concept $C$ if $C(x)=\ell$. Likewise, a set $S$ of labelled examples over $\cX$, which is also called a \emph{sample set}, is consistent with $C$, if every element of $S$ is consistent with $C$. An example with the label $\ell=1$ is a positive example, while $\ell=0$ is the label of a negative example. 

Intuitively, the notion of teaching refers to compressing any concept in a given concept class to a consistent sample set.

\begin{definition}\label{def:teacher}
Let $\cC$ be a concept class over a domain $\cX$. A {\em teacher mapping} for $\cC$ is a mapping $T$ on $\cC$ such that, for all $C\in\cC$, $T(C)$ is a finite sample set $S\subseteq \cX\times\{0,1\}$ that is consistent with $C$.
\end{definition}

The first model of teaching that was proposed in the literature required from a teacher mapping $T$ that the concept $C\in\cC$ be the only concept in $\cC$ that is consistent with $T(C)$, for any $C\in\cC$ \citep{SM1991,GK1995}. This led to the definition of the well-known teaching dimension parameter. 

\begin{definition}[\cite{SM1991,GK1995}]\label{def:td}
Let $\cC$ be a concept class over a domain $\cX$ and $C\in\cC$ be a concept. A {\em teaching set} for $C$ (with respect to $\cC$) is a sample set $S$ such that $C$ is the only concept in $\cC$ consistent with $S$. The {\em teaching dimension} of $C$ in $\cC$, denoted by $\TD(C,\cC)$, is the size of the smallest teaching set for $C$ with respect to $\cC$. The teaching dimension of $\cC$ is then defined as 
$\TD(\cC)=\sup\{\TD(C,\cC)\mid C\in\cC\}$.
\end{definition}


For example, let $\cC$ be a concept class over a domain $\cX$ of exactly $m$ elements, containing the empty concept, all singleton concepts over $\cX$, and no other concepts. Then $\TD(\{x\},\cC)=1$ for each singleton concept $\{x\}$, since $\{(x,1)\}$ serves as a teaching set for $\{x\}$. By comparison, $\TD(\emptyset,\cC)=m$, since any set of up to $m-1$ negative examples is consistent with some singleton concept, so that all $m$ negative examples need to be presented in order to identify the empty concept. Consequently, $\TD(\cC)=\sup\{\TD(C,\cC)\mid C\in\cC\}=m$.

As mentioned in the introduction, various notions of teaching have been proposed in the literature. The one that is most relevant to our work is the model of preference-based teaching. In this model, intuitively, a preference relation on $\cC$ is used to reduce the size of teaching sets. In particular, a concept $C$ need no longer be the only concept consistent with its 
teaching set 
$T(C)$; it suffices if $C$ is the unique most preferred concept in $\cC$ that is consistent with $\cC$. In order to avoid cyclic preferences, the preference relation is required to form a partial order over $\cC$.

\begin{definition}[\cite{GaoRSZ17}]\label{def:pbtd}
Let $\cC$ be a concept class over a domain $\cX$ and $\succ$ any binary relation that forms a strict (possibly non-total) order over $\cC$. We say that concept $C$ is preferred over concept $C'$ (with respect to $\succ$), if $C\succ C'$.  The {\em preference-based teaching dimension} of $C$ with respect to $\cC$ and $\succ$, denoted by $\PBTD(C,\cC,\succ)$, is the size of the smallest sample set $S$ such that 
\begin{enumerate}
    \item $S$ is consistent with $C$, and
    \item $C\succ C'$ for all $C'\in\cC\setminus\{C\}$ such that $S$ is consistent with $C'$.
\end{enumerate}
We write
$\PBTD(\cC,\succ)=\sup\{\PBTD(C,\cC,\succ)\mid C\in\cC\}$. Finally, the preference-based teaching dimension of $\cC$, denoted by $\PBTD(\cC)$, is 
defined by 
\[\PBTD(\cC)=\min\{\PBTD(\cC,\succ)\mid\,\succ\subseteq\cC\times\cC\mbox{ and }\succ\mbox{ forms a strict order on }\cC\}\,.\]
\end{definition}

An interesting variant of preference-based teaching is obtained when disallowing negative examples in teaching. Learning from positive examples only has been studied extensively in the computational learning theory literature, see, e.g., \citep{Denis01,Angluin80} and is motivated by studies on language acquisition~\citep{WexlerC80} or, more recently, by problems of learning user preferences from a user's interactions with, say, an e-commerce system~\citep{SchwabPK00}, as well as by problems in bioinformatics~\citep{WangDMH06}.

\begin{definition}[\cite{GaoRSZ17}]\label{def:pbtd+}
Let $\cC$ be a concept class over a domain $\cX$. The {\em positive preference-based teaching dimension} of $\cC$, denoted by $\PBTD^+(\cC)$, is defined analogously to $\PBTD(\cC)$, where the sets $S$ in Definition~\ref{def:pbtd} are required to contain only positive examples.
\end{definition}

In the same way, one can define the notion $\TD^+$. The following property, proven by \cite{GaoRSZ17}, is crucial when computing the $\PBTD$ and $\PBTD^+$ of finite classes.

\begin{proposition}[\cite{GaoRSZ17}]\label{prop:pbtdfinite}
Let $\cC$ be a finite concept class. If $\PBTD(\cC)\!=\!d$, then $\cC$ contains some $C$ with $\TD(C,\cC)\le d$. If $\PBTD^+(\cC)\!=\!d$, then $\cC$ contains some $C$ with $\TD^+(C,\cC)\le d$.
\end{proposition}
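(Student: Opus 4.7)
The plan is to exploit the existence of a $\succ$-minimal element in the strict partial order witnessing the preference-based teaching dimension. Suppose $\PBTD(\cC) = d$. Then by Definition~\ref{def:pbtd} there is a strict order $\succ$ on $\cC$ and a teacher mapping $T$ such that for every $C \in \cC$, the set $T(C)$ has size at most $d$, is consistent with $C$, and satisfies $C \succ C'$ for every $C' \in \cC \setminus \{C\}$ that is consistent with $T(C)$.

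Next I would use finiteness of $\cC$ together with the fact that $\succ$ is a strict partial order (hence acyclic) to pick a $\succ$-minimal concept $C^* \in \cC$, meaning a concept for which no $C' \in \cC$ satisfies $C^* \succ C'$. Such a $C^*$ exists because, starting from any element and repeatedly descending along $\succ$, irreflexivity and transitivity preclude revisiting an element, so the process terminates in finitely many steps at a concept that dominates nothing.

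I would then argue that $T(C^*)$ is in fact a teaching set for $C^*$ with respect to $\cC$ in the classical sense of Definition~\ref{def:td}. Indeed, if some $C' \in \cC \setminus \{C^*\}$ were consistent with $T(C^*)$, the preference-based teaching condition would force $C^* \succ C'$, contradicting minimality of $C^*$. Hence $C^*$ is the unique concept in $\cC$ consistent with $T(C^*)$, and $\TD(C^*, \cC) \le |T(C^*)| \le d$, proving the first claim. For the positive variant, the identical argument applied to a teacher witnessing $\PBTD^+(\cC) = d$ yields $\TD^+(C^*, \cC) \le d$, since the sample sets produced by that teacher contain only positive examples.

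The only non-routine point is the recognition that, at a $\succ$-minimal concept, the preference-based collusion-free condition collapses into the classical teaching requirement; the remaining steps are bookkeeping. No genuine obstacle is expected, as the finiteness of $\cC$ rules out the one pathology (infinite descending chains in $\succ$) that could otherwise prevent the selection of $C^*$.
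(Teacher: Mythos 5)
Your proof is correct and follows the standard argument for this result (which the paper only cites from \cite{GaoRSZ17} without reproving): take a concept that is least preferred under the witnessing order $\succ$, observe that the preference-based condition then forces its sample set to exclude every other concept, so it is a classical (positive) teaching set of size at most $d$. The existence of such a concept in a finite strict partial order is exactly the point you justify, and the same argument carries over verbatim to $\PBTD^+$.
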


This result immediately implies that $\PBTD$ and the well-known notion of $\RTD$\footnote{The RTD, short for ``recursive teaching dimension’’, is a well-studied teaching parameter defined by Zilles et al. (2011).} 
coincide for finite concepts classes, and so do $\PBTD^+$ and $\RTD^+$.

\section{Collusion-free Teaching and the Non-Clashing Property}

While there is no objective measure of how ``reasonable'' a formal model of teaching is, the literature offers some notions of what constitutes an ``acceptable'' model of teaching, i.e., one in which the teacher and learner do not collude. So far, the notion of collusion-free teaching that found the most positive resonance in the literature is the one defined by Goldman and Mathias.

\begin{definition}[\cite{GoldmanM96}]\label{def:collusion}
Let $\cC$ be a concept class over $\cX$ and $T$ a teacher mapping on $\cC$. Let $L$ be a learner mapping that assigns to each set of labelled examples a concept over $\cX$. The pair $(T,L)$ is successful on $\cC$ if $L(T(C))=C$ for all $C\in\cC$. The pair $(T,L)$ is {\em collusion-free} on $\cC$ if $L(S)=L(T(C))$ for any $C\in\cC$ and any set $S$ of labelled examples such that $S$ is consistent with $C$ and $S$ contains $T(C)$.
\end{definition}

\noindent Intuitively, Goldman and Mathias's definition captures the idea that a learner conjecturing concept $C$ will not change its mind when given additional information consistent with $C$.

For example, teacher-learner pairs following the classical teaching dimension model, Balbach's optimal teacher model, the recursive teaching model, or the preference-based teaching model are always collusion-free according to Definition~\ref{def:collusion}. Of these models, the classical teaching dimension model is the one imposing the most constraints on the mapping $T$, followed by Balbach's optimal teaching, recursive teaching, and preference-based teaching in that order. Consequently, the ``teaching complexity'' among these models is lowest for preference-based teaching; if every concept in a concept class $\cC$ can be taught with at most $z$ examples in any of these models, then every concept in $\cC$ can be taught with at most $z$ examples in the preference-based model.

One can still argue that the preference-based model is unnecessarily constraining. Preference-based teaching of a concept class $\cC$ relies on a preference relation that induces a strict order on $\cC$. However, this strict order is used by the learner only after the teaching set has been communicated, since the learner chooses the unique most preferred concept among those \emph{consistent with the set of examples provided by the teacher}. One might consider loosening the constraints by, for example, demanding only that the set of concepts consistent with any chosen teaching set be ordered under the chosen preference relation (rather than requiring acyclic preferences over the whole concept class). In the same vein, one could relax more conditions---every relaxation might result in a more powerful model of teaching satisfying the collusion-free property. 

In this manuscript, we will define the provably most powerful model of teaching that is collusion-free in the sense proposed by \cite{GoldmanM96}, namely a model that adheres to no other constraints on the teacher-learner pairs $(T,L)$ than those given by Goldman and Mathias: (i) $T$ is a teacher mapping; (ii) $(T,L)$ is successful on $\cC$; and (iii) $(T,L)$ is collusion-free on $\cC$.

Before we define this model formally, we introduce 
a crucial property
that was originally proposed by \cite{KW2007} in the
context of unlabeled sample compression.

\begin{definition}
Let $\cC$ be a concept class and $T$ be a teacher mapping on $\cC$. 
We say that $T$ is {\em non-clashing} (on $\cC$) if and only if there are no two distinct $C,C'\in\cC$ such that both $T(C)$ is consistent with $C'$ and $T(C')$ is consistent with $C$.
\end{definition}

It turns out that, for a teacher mapping $T$, the non-clashing property is equivalent to the existence of a learner mapping $L$ such that $(T,L)$ is successful and collusion-free:

\begin{theorem}\label{thm:nc}
Let $\cC$ be a concept class over the instance space $\cX$. Let $T$ be a teacher mapping on $\cC$. Then the following two conditions are equivalent:
\begin{enumerate}
\item $T$ is non-clashing on $\cC$.
\item There is a mapping $L:2^{\cX\times\{0,1\}}\rightarrow\cC$ such that $(T,L)$ is both successful and collusion-free on $\cC$. 
\end{enumerate}
\end{theorem}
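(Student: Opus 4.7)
The plan is to prove both implications directly, with the non-clashing condition serving as exactly the well-definedness hypothesis for a natural learner construction.

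For the direction $(2)\Rightarrow(1)$, I would argue by contrapositive. Suppose $T$ clashes, so there exist distinct $C,C'\in\cC$ with $T(C)$ consistent with $C'$ and $T(C')$ consistent with $C$. Consider $S:=T(C)\cup T(C')$. Since $T(C)$ is consistent with $C$ (by Definition~\ref{def:teacher}) and $T(C')$ is consistent with $C$ by assumption, $S$ is consistent with $C$; symmetrically, $S$ is consistent with $C'$. Applying collusion-freeness twice, once with target $C$ (so that $L(S)=L(T(C))=C$ by success) and once with target $C'$ (so that $L(S)=L(T(C'))=C'$), yields $C=C'$, a contradiction.

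For the direction $(1)\Rightarrow(2)$, I would explicitly construct $L$. Define
\[
L(S) \;=\; C \quad \text{whenever there exists } C\in\cC \text{ with } T(C)\subseteq S \text{ and } S \text{ consistent with } C,
\]
and set $L(S)$ arbitrarily otherwise. The crucial step is showing this is well-defined: if two candidates $C,C'$ both satisfied the defining clause, then $T(C)\subseteq S$ together with consistency of $S$ with $C'$ would force $T(C)$ to be consistent with $C'$, and symmetrically $T(C')$ with $C$, contradicting the non-clashing property of $T$. Success ($L(T(C))=C$) is immediate by taking $S=T(C)$, and collusion-freeness is immediate from the definition, since any $S\supseteq T(C)$ consistent with $C$ satisfies the defining clause with witness $C$.

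The only subtle point, and the one I expect to be the main obstacle to readability rather than difficulty, is articulating why the non-clashing condition is \emph{exactly} the right hypothesis to make the natural ``containment plus consistency'' learner single-valued; everything else reduces to unpacking Definitions~\ref{def:teacher} and~\ref{def:collusion}. Note that the learner is constructed on all of $2^{\cX\times\{0,1\}}$, not only on sample sets arising as supersets of some $T(C)$, so the ``arbitrary otherwise'' clause is needed to match the signature required in condition~2; it has no effect on either success or collusion-freeness since both properties only constrain $L$ on supersets of teaching sets that are consistent with the corresponding target concept.
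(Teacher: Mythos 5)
Your proposal is correct and follows essentially the same route as the paper: the same ``containment plus consistency'' construction of $L$ with well-definedness guaranteed by the non-clashing property, and the same union-of-teaching-sets argument (via the chain $C=L(T(C))=L(T(C)\cup T(C'))=L(T(C'))=C'$) for the converse. The only cosmetic difference is that you phrase the converse as a contrapositive, whereas the paper argues it directly; the content is identical.
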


\begin{proof}
First, suppose $T$ is a non-clashing teacher mapping, and define $L$ as follows. Given any set $S$ of labelled examples as input, $L$ checks for the existence of a concept $C\in\cC$ such that $T(C)\subseteq S$ and $C$ is consistent with $S$. If such a concept $C$ is found, $L$ returns an arbitrary such $C$; otherwise $L$ returns some default concept in $\cC$. 

To show that $(T,L)$ is successful and collusion-free, suppose there is some concept $C\in\cC$ such that a given set $S$ of labelled examples is consistent with $C$ and contains $T(C)$. We claim that then such $C$ is uniquely determined. For if there were two distinct concepts $C,C'\in\cC$ consistent with $S$ such that $T(C)\cup T(C')\subseteq S$, then $T(C')$, being a subset of $S$, would be consistent with $C$ and, likewise, $T(C)$ would be consistent with $C'$---in contradiction to the non-clashing property of $T$. From the definition of $L$, it then follows that $(T,L)$ is successful and collusion-free.

Second, suppose $T$ is a teacher mapping and there is a mapping $L$ such that $(T,L)$ is successful and collusion-free, i.e., for all $C\in\cC$, we have $L(S)=L(T(C))=C$ whenever $S$ is consistent with $C$ and contains $T(C)$. To see that $T$ is non-clashing, suppose two concepts $C,C'\in\cC$ are both consistent with $T(C)\cup T(C')$. Then
$C=L(T(C))=L(T(C)\cup T(C'))=L(T(C'))=C'$. 
\end{proof}

Consequently, teaching with non-clashing teacher mappings is, in terms of the worst-case number of examples required, the most efficient model that obeys Goldman and Mathias's notion of collusion-freeness. We hence define the notion of no-clash teaching dimension as follows.

\begin{definition}
Let $\cC$ be a concept class over the instance space $\cX$. Let $T:\cC\rightarrow(\cX\times\{0,1\})^*$ be a non-clashing teacher mapping. 
The {\em order} of $T$ on $\cC$, denoted by $\TC(T,\cC)$, is then defined by 
$\TC(T,\cC)=\sup\{|T(C)|\mid C\in\cC\}$.
The {\em No-Clash Teaching Dimension} of $\cC$, denoted by $\NCTD(\cC)$, is defined as $\NCTD(\cC)=\min\{\TC(T,\cC)\mid T\mbox{ is a non-clashing teacher mapping for }\cC\}$.
\end{definition}

From Theorem~\ref{thm:nc} we obtain that, for every concept class $\cC$,
\[
\NCTD(\cC)=\min\{\TC(T,\cC)\mid \mbox{ there exists an }L\mbox{ s.t. }(T,L)\mbox{ is successful and collusion-free on }\cC\}.
\]

As in the case of preference-based teaching, it is natural to study a variant of non-clashing teaching that uses positive examples only.

\begin{definition}\label{def:nctd+}
Let $\cC$ be a concept class over the domain $\cX$. A teacher mapping $T$ is called positive on $\cC$ if $T(C)\subseteq \cX\times\{1\}$ for all $C\in\cC$. We then define 
$\NCTD^+(\cC)=\min\{\TC(T,\cC)\mid T\mbox{ is a positive non-clashing teacher mapping for }\cC\}$.
\end{definition}

Furthermore, for finite domains $\cX$, it will be helpful to have the notion of average no-clash teaching dimension:

\begin{definition}\label{def:anctd}
Let $\cC$ be a concept class over the finite domain $\cX$. The {\em Average No-Clash Teaching Dimension} of $\cC$, denoted by $\ANCTD(\cC)$, is defined as
\[
\ANCTD(\cC) = \min\left\{\left.\frac{1}{|\cC|}\sum_{C\in\cC}|T(C)|
\right| \mbox{ $T$ is a non-clashing teacher mapping for $\cC$} \right\}
\enspace .
\]
\end{definition}

{\remark \label{rem:(A)NCTD}
{\rm
It follows immediately from the pigeon-hole principle that 
$\NCTD(\cC) \ge \lceil \ANCTD(\cC) \rceil$.
}
}

In the following we describe a natural \emph{normal form} for non-clashing teacher mappings.
$T'$ is said to be an {\em extension} of $T$ if $T(C) \seq T'(C)$
holds for every $C\in\cC$. Clearly, if $T'$ is an extension of $T$
and $T$ is non-clashing, then $T'$ is non-clashing.

\begin{proposition}\label{prop:normalform}
(a) Let $T$ be a non-clashing teacher mapping for $\cC$. Then there is a non-clashing teacher mapping $T'$ for $\cC$ such that $|T'(C)| = \TC(T,\cC)$ for all $C\in\cC$.\\
(b) Let $T$ be a positive non-clashing teacher mapping for $\cC$. Then there is a positive non-clashing teacher mapping $T'$ for $\cC$ such that $|T'(C)| = \min \{ |C|, \TC(T,\cC) \}$ for all $C\in\cC$.
\end{proposition}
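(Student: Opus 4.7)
My plan is to construct $T'$ as an extension of $T$ (in the sense defined immediately before the proposition), since the prefatory remark already gives that any extension of a non-clashing teacher mapping is again non-clashing. The whole construction amounts to padding each $T(C)$ with further examples consistent with $C$ (respectively, with further positive examples of $C$ in part (b)) until it reaches the target cardinality; the only thing left to verify will be that sufficiently many unused instances are available.

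For part (a), I would set $d:=\TC(T,\cC)$, assume $d<\infty$, and for each $C\in\cC$ let $I(C)\seq\cX$ denote the set of instances appearing in $T(C)$. Since any sample consistent with a concept labels each instance at most once, $|I(C)|=|T(C)|$; applied to a $C$ whose sample achieves the supremum, this also yields $d\le|\cX|$. Hence $|\cX\sm I(C)|\ge d-|T(C)|$ for every $C$, and I would pick arbitrary instances $x_1,\dots,x_{d-|T(C)|}\in\cX\sm I(C)$ and define
\[
T'(C):=T(C)\cup\{(x_i,C(x_i))\mid 1\le i\le d-|T(C)|\}.
\]
By construction $T'(C)$ is consistent with $C$, has size exactly $d$, and contains $T(C)$, so $T'$ is an extension of $T$ and therefore non-clashing.

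For part (b) the argument is structurally identical, the only difference being that the pool of admissible new instances is $C$ rather than $\cX$. Since $T$ is positive, $T(C)\seq C\times\{1\}$ with each instance used at most once, giving $|T(C)|\le|C|$; the target size $d':=\min\{|C|,d\}$ is thus at most $|C|$. I would pad $T(C)$ with $d'-|T(C)|$ pairs $(x,1)$ drawn from $C\sm I(C)$, of which $|C|-|T(C)|\ge d'-|T(C)|$ are available, yielding a positive extension $T'$ of $T$ with $|T'(C)|=d'$ for every $C$. Non-clashingness of $T'$ again follows from the prefatory remark.

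There is essentially no conceptual difficulty here; the main (mild) point to keep straight is the cardinality bookkeeping, specifically $d\le|\cX|$ in (a) and $|T(C)|\le|C|$ in (b), both of which drop out immediately from the observation that a consistent sample labels each instance at most once.
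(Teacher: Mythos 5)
Your proposal is correct and follows exactly the route the paper intends: the paper leaves the proposition unproved, relying only on the preceding remark that any extension of a non-clashing (positive) teacher mapping remains non-clashing, and your padding construction with the cardinality checks $\TC(T,\cC)\le|\cX|$ and $|T(C)|\le|C|$ supplies precisely the missing details.
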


While many of our definitions and results apply to both finite and infinite concept classes, 
except where explicitly stated otherwise,
we will
hereafter
assume that $\cX$ (and $\cC$) are finite.

\section{Lower Bounds on $\NCTD$ and $\NCTD^+$}\label{sec:lb}

To establish lower bounds on $\NCTD$ and $\NCTD^+$ for finite concept classes, we first show that $\NCTD(\cC)$ must be at least as large as the smallest $d$ satisfying $|\cC| \le 2^d {|\cX| \choose d}$. A similar statement then follows for $\NCTD^+$. In fact, we prove a slightly stronger result, replacing $|\cX|$ with a potentially smaller value: 

\begin{definition}
We define $\cX_T\seq\cX$
as the set of instances that are part of a labelled example
in a teaching set $T(C)$ for some $C\in\cC$.
Moreover, we define 
\[ 
X(\cC) = 
\min\{|\cX_T|:\ \mbox{$T$ is a non-clashing teacher mapping for $\cC$ with }\TC(\cC,T) = \NCTD(\cC)\}
\enspace .
\]
\end{definition}

Intuitively, $X(\cC)$ is the smallest number of instances that must be employed by any optimal non-clashing teacher mapping for $\cC$. Likewise, we define $X^+(\cC)$ for positive non-clashing teaching.

\begin{theorem} \label{th:ub-size}
Let $\cC$ be any concept class.
\begin{enumerate}
    \item If $\NCTD(\cC)=d$, then $|\cC| \le 2^d {X(\cC) \choose d}$.
    \item If $\NCTD^+(\cC)=d$, then $|\cC| \le \sum_{i=0}^d {X^+(\cC) \choose i}$.
\end{enumerate}
\end{theorem}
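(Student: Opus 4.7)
The plan is, for each part, to fix an optimal non-clashing teacher mapping $T$ realizing both $\TC(T,\cC) = d$ and the instance-minimum $X(\cC)$ (resp. $X^+(\cC)$), to observe that the non-clashing property forces $T$ to be injective on $\cC$, and then to bound $|\cC|$ by counting the possible values in the image of $T$. Injectivity is uniform across both parts: if $T(C) = T(C')$ for distinct $C, C' \in \cC$, then $T(C)$ is consistent with $C$ by definition and, since $T(C) = T(C')$, also consistent with $C'$; symmetrically $T(C')$ is consistent with $C$. That is a clash, contradicting the choice of $T$.

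For part~(1), I would first normalize $T$ so that $|T(C)| = d$ for every $C$ while keeping $\cX_T$ unchanged. Because $\NCTD(\cC)=d$ forces at least one teaching set to use $d$ labelled examples with pairwise distinct first coordinates, $|\cX_T| \ge d$, so any smaller $T(C)$ can be padded by adjoining labelled examples $(x, C(x))$ for instances $x \in \cX_T$ not already in use. This is a consistent extension of $T$ and therefore remains non-clashing (as observed immediately before Proposition~\ref{prop:normalform}). Each normalized teaching set is then determined by choosing a $d$-subset of $\cX_T$ and labelling each chosen instance in $\{0,1\}$, giving at most $2^d \binom{X(\cC)}{d}$ distinct possibilities. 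Injectivity of $T$ on $\cC$ then yields $|\cC| \le 2^d \binom{X(\cC)}{d}$.

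For part~(2), no normalization is required: each $T(C) \subseteq \cX_T \times \{1\}$ with $|T(C)| \le d$ is identified with a subset of $\cX_T$ of cardinality at most $d$, and there are $\sum_{i=0}^d \binom{X^+(\cC)}{i}$ such subsets. Injectivity completes the count. The only genuine subtlety is in part~(1): one must guarantee that padding to size $d$ does not enlarge the instance set beyond $X(\cC)$. This is handled by the two facts above, namely that non-clashing is closed under arbitrary consistent extension (so the padding may be chosen freely without breaking the property) and that $|\cX_T| \ge d$ (so enough instances are available within $\cX_T$ to bring every teaching set up to size $d$).
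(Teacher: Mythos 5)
Your proof is correct and follows essentially the same route as the paper's: fix an optimal non-clashing mapping attaining $X(\cC)$, normalize so that every teaching set has size exactly $d$, note that the non-clashing property forces injectivity of the mapping, and count the available teaching sets over $\cX_T$. You are in fact slightly more careful than the paper, which simply cites Proposition~\ref{prop:normalform} for the normalization, whereas you explicitly check that the padding can be carried out inside $\cX_T$ (using $|\cX_T|\ge d$) so that the instance count stays at $X(\cC)$.
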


\begin{proof}
To prove statement 1, let $\cX'$ be a subset of size $X(\cC)$ of $\cX$. 
Let $C \mapsto T(C) \seq \cX'\times\{0,1\}$ be a consistent 
and non-clashing mapping which witnesses that $\NCTD(\cC)=d$, 
and let $L$ be the mapping such that $L(T(C)) = C$ 
for all $C\in\cC$. By Proposition~\ref{prop:normalform}, one may assume 
without loss of generality that $|T(C)|=d$ for all $C\in\cC$. 
Since $T$ is an injective mapping and there are 
only $2^d {X(\cC) \choose d}$ labelled teaching sets at our 
disposal, the claim follows.

Statement 2 is proven analogously, taking into consideration that, in the $\NCTD^+$ case, we do not have an analogous statement to Proposition~\ref{prop:normalform}, since a concept does not in general contain $d$ or more elements. Note that the formula has no factors $2^i$ since there are no options for labelling the instances in any set $T(C)$.
\end{proof}


We will next establish a useful lower bound on $\NCTD(\cC)$,
as well as as a related lower bound on $\NCTD^+(\cC)$, based on the number of neighbors of any concept in $\cC$. 

A concept $C'\in\cC$ is a {\em neighbor} of concept $C\in\cC$ if it differs from $C$ on exactly one instance, i.e., if
the symmetric difference
$C \Delta C' := (C \setminus C') \cup (C' \setminus C)$ has size one.
The {\em degree} of $C\in\cC$, denoted as $\deg_\cC(C)$, 
is defined as the number of neighbors of $C$ in $\cC$. 
The average degree of concepts in $\cC$ is then denoted by
\[ 
\deg_{avg}(\cC) := \frac{1}{|\cC|}\cdot \sum_{C\in\cC}\deg_\cC(C)\,.
\]
The {\em dominance} of $C\in\cC$, denoted as $\dom_\cC(C)$, 
is defined as the number of smaller neighbors of $C$ in $\cC$, i.e. neighbors that contain exactly one fewer instance than $C$. 

\begin{theorem} \label{th:degree-lb}
Every concept class $\cC$ over a finite domain 
satisfies (i)
$\ANCTD(\cC) \ge \frac{1}{2} \cdot \deg_{avg}(\cC)$\\
and (ii) $\NCTD(\cC) \ge \lceil \frac{1}{2} \cdot \deg_{avg}(\cC) \rceil$.
\end{theorem}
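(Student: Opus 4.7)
Statement (ii) is immediate from (i) together with Remark~\ref{rem:(A)NCTD}: applying $\lceil \cdot \rceil$ to the inequality $\NCTD(\cC) \ge \ANCTD(\cC) \ge \tfrac{1}{2}\deg_{avg}(\cC)$ gives the integer lower bound. So the entire task reduces to proving (i). For this I would fix an arbitrary non-clashing teacher mapping $T$ for $\cC$ and prove the unnormalized bound
\[
\sum_{C \in \cC} |T(C)| \;\ge\; \tfrac{1}{2}\sum_{C \in \cC} \deg_\cC(C).
\]
Dividing both sides by $|\cC|$ and then taking the minimum over all non-clashing $T$ yields $\ANCTD(\cC) \ge \tfrac{1}{2}\deg_{avg}(\cC)$.

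The key structural observation I would use links the non-clashing condition to the neighbor graph on $\cC$. Suppose $C, C' \in \cC$ are neighbors, so that $C \Delta C' = \{x\}$ for some instance $x$. Any labelled example with instance $\ne x$ is consistent with $C$ iff it is consistent with $C'$. Therefore $T(C)$ is inconsistent with $C'$ if and only if $(x, C(x)) \in T(C)$, and symmetrically $T(C')$ is inconsistent with $C$ if and only if $(x, C'(x)) \in T(C')$. Since $T$ is non-clashing, at least one of these two conditions must hold; that is, the distinguishing instance $x$ appears (with the appropriate label) in $T(C)$ or in $T(C')$.

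This sets up a charging argument. For each unordered neighbor-edge $\{C,C'\}$ with distinguishing instance $x$, I would charge the edge to one endpoint whose teaching set contains the corresponding labelled example (at least one such endpoint exists by the previous paragraph). The bookkeeping I need to check—and the part I expect to be the main obstacle to articulate cleanly—is that the number of edges charged to any fixed concept $C$ is at most $|T(C)|$. The reason is that each instance $x$ identifies at most one neighbor of $C$, namely the concept $C \Delta \{x\}$ (if it lies in $\cC$); hence the map sending a charged edge at $C$ to the underlying instance in $T(C)$ is injective. Summing the charges over all concepts and comparing with the total number of edges $|E| = \tfrac{1}{2}\sum_{C\in\cC}\deg_\cC(C)$ yields $\sum_C |T(C)| \ge |E|$, which completes the argument. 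The remaining steps—normalizing by $|\cC|$ for (i), and applying Remark~\ref{rem:(A)NCTD} together with the ceiling for (ii)—are routine.
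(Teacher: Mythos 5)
Your proposal is correct and follows essentially the same route as the paper's proof: the non-clashing condition forces the distinguishing instance of each neighbor pair into at least one of the two teaching sets, and summing over all neighbor pairs gives $\sum_{C\in\cC}|T(C)| \ge \frac{1}{2}\sum_{C\in\cC}\deg_\cC(C)$, with (ii) then following from Remark~\ref{rem:(A)NCTD}. You merely make explicit the injectivity of the edge-to-instance charging at each concept, which the paper leaves implicit.
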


\begin{proof}
For assertion (i), let $T$ be any non-clashing teacher mapping for $\cC$. If $C_1$ and $C_2$ 
are neighbors, say $C_1 \Delta C_2 = \{x_i\}$, then at least one 
of the sets $T(C_1),T(C_2)$ must contain $x_i$. We obtain
$ 
\sum_{C\in\cC}|T(C)| \ge \frac{1}{2}\cdot\sum_{C\in\cC}\deg_\cC(C)
= |\cC|\cdot\frac{1}{2} \cdot \deg_{avg}(\cC)
$. Assertion (ii) is immediate from (i), by Remark~\ref{rem:(A)NCTD}.
\end{proof}


\begin{theorem} \label{th:degree+-lb}
Every concept class $\cC$ over a finite domain 
satisfies 
$\NCTD^+(\cC) \ge \max_{C\in\cC}\dom_\cC(C)$.
\end{theorem}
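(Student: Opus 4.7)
The plan is to show that if $C \in \cC$ has dominance $d = \dom_\cC(C)$, then in any positive non-clashing teacher mapping $T$ for $\cC$, the teaching set $T(C)$ must already contain at least $d$ distinct positive examples, namely one for each smaller neighbor of $C$.

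First I would fix a positive non-clashing teacher mapping $T$ for $\cC$ and a concept $C \in \cC$ with smaller neighbors $C_1, \ldots, C_d \in \cC$, where for each $i$ we have $C \setminus C_i = \{x_i\}$ with the $x_i$ pairwise distinct elements of $C$. The key observation is that because $T$ is positive, $T(C_i) \seq C_i \times \{1\}$; and because $C_i \seq C$, every positive example valid for $C_i$ is automatically valid for $C$. Hence $T(C_i)$ is consistent with $C$ for every $i$.

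Next, since $T$ is non-clashing, for each $i$ we cannot also have $T(C)$ consistent with $C_i$; so $T(C)$ must contain some labelled example that rules out $C_i$. As $T(C) \seq C \times \{1\}$, the only possible such example is $(x_i, 1)$, since any other $(y,1) \in T(C)$ has $y \in C \cap C_i$ and is consistent with $C_i$. Therefore $(x_i, 1) \in T(C)$ for every $i \in \{1, \ldots, d\}$. Because the $x_i$ are distinct, this forces $|T(C)| \ge d$.

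Taking the supremum over $C$ yields $\TC(T,\cC) \ge \max_{C \in \cC} \dom_\cC(C)$, and minimizing over positive non-clashing teacher mappings $T$ gives the claim. I do not anticipate a real obstacle here: the entire argument hinges on the asymmetric observation that positive teaching sets of smaller neighbors are automatically consistent with the larger concept, so the burden of distinguishing falls entirely on $T(C)$, and each of the $d$ distinguishing instances $x_i$ is forced and distinct.
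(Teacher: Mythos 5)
Your proof is correct and follows essentially the same approach as the paper's: the paper's (very terse) argument likewise rests on the observation that a positive teaching set for a smaller neighbor $C_i = C\setminus\{x_i\}$ is automatically consistent with $C$, so non-clashing forces $(x_i,1)\in T(C)$ for each of the $\dom_\cC(C)$ distinct instances $x_i$. You have simply spelled out the details the paper leaves implicit.
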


\begin{proof}
If the smaller neighbor $C'$ of $C \in \cC$ differs from $C$ on instance $x_i$, then 
$(x_i,1)$ must be used in teaching $C$.
Hence, every $C \in \cC$ must have a positive teaching set of size at least $\dom_\cC(C)$.
\end{proof}


Although the lower bounds in Theorems~\ref{th:degree-lb} and \ref{th:degree+-lb} are not 
expected to be attained very often, the following Remark shows that they are sometimes tight:

{\remark \label{rem:powerset}
{\rm
Let $\cP_m$ be the  
powerset over the domain $\{x_1,\ldots,x_m\}$.
Since every concept in $\cP_m$ has degree $m$, clearly $\deg_{avg}(\cP_2) =m$. It follows from Theorem~\ref{th:degree-lb} that 
$\ANCTD(\cP_m) \ge  m/2$ and hence $\NCTD(\cP_m) \ge  \lceil m/2 \rceil $.
Furthermore, since $\dom_{\cP_m}(\{x_1,\ldots,x_m\})= m$, it follows from Theorem~\ref{th:degree+-lb} that $\NCTD^+(\cP_m) \ge m$. But the positive mapping $T$ that maps $S \in \cP_m$ to $S \times \{1\}$ is trivially non-clashing, and hence $\NCTD^+(\cP_m) = m$ and $\ANCTD(\cP_m) =  m/2$.
As the mapping $T$ given by
\[
\eset\mapsto\{(a,0)\} , \{a\}\mapsto\{(b,0)\} ,
\{b\}\mapsto\{(b,1)\} , \{a,b\}\mapsto\{(a,1)\} \enspace ,
\]
is non-clashing for $\cP_2$, it follows that $\NCTD(\cP_2) = 1$.
As we shall see in Theorem~\ref{th:powerset}, this generalizes to 
$\NCTD(\cP_m)= \lceil m/2 \rceil$.
}
}



We note that the maximum degree of a concept in $\cC$ is in general not an upper bound on $\NCTD(\cC)$. For example, if we consider the concept class $\cC$ consisting of subsets of size $k$ of some domain of size $n$, then all concepts in $\cC$ have degree zero yet, for $n$ sufficiently large, $\NCTD(\cC) = k$
(since for large enough $n$ the size of $\cC$ exceeds the number of possible teaching sets in a normal-form teaching mapping $T$ for $\cC$ with $\TC(T,\cC) < k$.)


\section{Sub-additivity of $\NCTD$ and $\NCTD^+$}\label{sec:ub}

In this section, we will show that the $\NCTD$ is sub-additive with respect to the free combination of concept classes. As an application of this result, we will determine the $\NCTD$ of the powerset over any finite domain $\cX$. While the powerset is a rather special concept class, knowing its $\NCTD$ will turn out useful to obtain a variety of further results.

\begin{definition}
Let $\cC_1$ and $\cC_2$ be concept classes over disjoint 
domains $\cX_1$ and $\cX_2$, respectively. Then the {\em free combination} $\cC_1\sqcup\cC_2$ of $\cC_1$ and $\cC_2$ is a concept class over the domain $\cX_1\cup\cX_2$
defined by $\cC_1\sqcup\cC_2 = 
\{C_1 \cup C_2|\ C_1\in\cC_1\mbox{ and } C_2\in\cC_2\}$.
\end{definition}


\begin{lemma} \label{lem:nctd-subadditive}
Let $\cC = \cC_1\sqcup\cC_2$ be the free combination of $\cC_1$
and $\cC_2$. Moreover, for $i=1,2$, let $T_i$ be a non-clashing mapping 
for $\cC_i$. Then, for $T(\cC_1\sqcup\cC_2)$ defined by setting
$ T(C_1 \cup C_2) = T_1(C_1) \cup T_2(C_2)$,
we have that
$T$ is a non-clashing teacher mapping 
for $\cC_1\sqcup\cC_2$. Moreover, as witnessed
by $T$, $\NCTD$ acts sub-additively on $\sqcup$, i.e.,
\begin{equation} \label{eq:nctd-subadditive}
\NCTD(\cC_1\sqcup\cC_2) \le \NCTD(\cC_1)+\NCTD(\cC_2) \enspace .
\end{equation}
\end{lemma}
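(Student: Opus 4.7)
The plan is to verify directly that the mapping $T$ defined by $T(C_1\cup C_2)=T_1(C_1)\cup T_2(C_2)$ satisfies the three things needed: it is a well-defined teacher mapping for $\cC_1\sqcup\cC_2$, it is non-clashing, and its order is at most $\TC(T_1,\cC_1)+\TC(T_2,\cC_2)$. Because $\cX_1$ and $\cX_2$ are disjoint, every $C\in\cC_1\sqcup\cC_2$ decomposes uniquely as $C_1\cup C_2$ with $C_i\in\cC_i$, so $T$ is well-defined. Consistency is immediate: each example in $T_1(C_1)$ has its instance in $\cX_1$, and $(C_1\cup C_2)(x)=C_1(x)$ for $x\in\cX_1$ since $\cX_1\cap\cX_2=\eset$; the situation is symmetric for $T_2(C_2)$.

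The main step is the non-clashing property. I would argue by contradiction: suppose $C_1\cup C_2$ and $C_1'\cup C_2'$ are distinct concepts in $\cC_1\sqcup\cC_2$ for which both sides clash, i.e., each of the two is consistent with $T(C_1\cup C_2)\cup T(C_1'\cup C_2')$. I would then split the combined sample by the disjoint domains: the part over $\cX_1$ is exactly $T_1(C_1)\cup T_1(C_1')$, and consistency of $C_1\cup C_2$ with a sample supported in $\cX_1$ is the same as consistency of $C_1$ with that sample. Hence $T_1(C_1)$ is consistent with $C_1'$ and $T_1(C_1')$ is consistent with $C_1$, and non-clashing of $T_1$ forces $C_1=C_1'$. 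The symmetric argument over $\cX_2$ yields $C_2=C_2'$, contradicting the assumed distinctness.

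For the order bound, disjointness of $\cX_1\times\{0,1\}$ and $\cX_2\times\{0,1\}$ gives $|T(C_1\cup C_2)|=|T_1(C_1)|+|T_2(C_2)|$, so $\TC(T,\cC_1\sqcup\cC_2)\le\TC(T_1,\cC_1)+\TC(T_2,\cC_2)$. Choosing $T_1$ and $T_2$ to be optimal non-clashing mappings for $\cC_1$ and $\cC_2$ yields inequality~(\ref{eq:nctd-subadditive}). The only substantive point in the whole argument is the decoupling step used in the non-clashing verification, and it relies essentially on the hypothesis that the two domains are disjoint; without disjointness, a single instance could carry contradictory labels in $T_1(C_1)$ and $T_2(C_2')$, and the projection argument would break down.
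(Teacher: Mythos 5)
Your proof is correct and follows essentially the same route as the paper's: assume a clash between two distinct combined concepts and project onto one of the disjoint domains to obtain a clash for the corresponding component mapping $T_i$, contradicting its non-clashing property. The only cosmetic difference is that you conclude $C_1=C_1'$ and $C_2=C_2'$ from both projections and then contradict distinctness, whereas the paper picks (WLOG) a component where the two concepts differ and derives the clash there directly; your explicit checks of well-definedness, consistency, and the order bound are fine and are left implicit in the paper.
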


\begin{proof}
Suppose that concepts $C_{i_1}, C_{j_1} \in \cC_1$ and $C_{i_2}, C_{j_2} \in \cC_2$ give rise to distinct concepts $C_{i_1} \cup C_{i_2}$ and  $C_{j_1} \cup C_{j_2} \in \/ \cC_1\sqcup\cC_2$ that clash under $T$.
%
(Without loss of generality we can assume that
$i_1 \neq j_1$.) Then $C_{j_1} \cup C_{j_2}$
is consistent with $T_1(C_{i_1}) \cup T_2(C_{i_2})$ and
$C_{i_1} \cup C_{i_2}$
is consistent with $T_1(C_{j_1}) \cup T_2(C_{j_2})$. Hence 
$C_{j_1}$ is consistent with $T_1(C_{i_1})$ and 
$C_{i_1}$ is consistent with $T_1(C_{j_1})$, that is concepts 
$C_{i_1}$ and $C_{j_1}$ in $\cC_1$ clash under the mapping $T_1$.
\end{proof}

\noindent
As we shall see NCTD sometimes acts strictly sub-additively on $\sqcup$; in particular, the composition of optimal mappings for $\cC_1$ and $\cC_2$ is not necessarily an optimal mapping for $\cC_1 \sqcup \cC_2$.
In contrast, ANCTD acts 
additively on $\sqcup$:

\begin{lemma} \label{lem:anctd-additive}
Let $\cC = \cC_1\sqcup\cC_2$ be the free combination of $\cC_1$
and $\cC_2$. Moreover, for $i=1,2$, let $T_i$ be a non-clashing mapping 
for $\cC_i$. Then, for $T(\cC_1\sqcup\cC_2)$ defined by setting
$ T(C_1 \cup C_2) = T_1(C_1) \cup T_2(C_2)$,
we have that
$T$ is a non-clashing teacher mapping 
for $\cC_1\sqcup\cC_2$. Moreover, as witnessed
by $T$, $\ANCTD$ acts additively on $\sqcup$, 
i.e., 
\begin{equation} \label{eq:anctd-subadditive}
\ANCTD(\cC_1\sqcup\cC_2) = \ANCTD(\cC_1)+\ANCTD(\cC_2) \enspace .
\end{equation}
\end{lemma}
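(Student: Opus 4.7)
The plan is to prove the equality by separately establishing the two inequalities. The claim that the combined mapping $T$ is non-clashing whenever each $T_i$ is---a claim identical to the corresponding one in Lemma~\ref{lem:nctd-subadditive}---I would simply inherit from that earlier argument. What is new and specific to $\ANCTD$ is the additivity assertion, so I focus the plan on that.

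For the upper bound $\ANCTD(\cC_1\sqcup\cC_2) \le \ANCTD(\cC_1) + \ANCTD(\cC_2)$, I take non-clashing $T_1, T_2$ achieving the respective $\ANCTD$ values and form $T$ as prescribed in the statement. Since $\cX_1$ and $\cX_2$ are disjoint, the union $T_1(C_1) \cup T_2(C_2)$ is disjoint, giving $|T(C_1 \cup C_2)| = |T_1(C_1)| + |T_2(C_2)|$. Summing over all $|\cC_1|\cdot|\cC_2|$ concepts in $\cC_1\sqcup\cC_2$ and dividing by $|\cC_1||\cC_2|$ decouples the two sums and yields exactly $\ANCTD(\cC_1) + \ANCTD(\cC_2)$.

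For the lower bound, let $T$ be an arbitrary non-clashing mapping for $\cC_1 \sqcup \cC_2$. For each pair $(C_1, C_2) \in \cC_1\times\cC_2$, I partition $T(C_1 \cup C_2)$ into its $\cX_1$- and $\cX_2$-parts $T^{(1)}(C_1, C_2)$ and $T^{(2)}(C_1, C_2)$. The crucial observation is that, for each fixed $C_2 \in \cC_2$, the map $C_1 \mapsto T^{(1)}(C_1, C_2)$ is itself non-clashing on $\cC_1$: if distinct $C_1, C_1'$ clashed under this restriction, then since $T^{(2)}(C_1, C_2)$ and $T^{(2)}(C_1', C_2)$ consist of $\cX_2$-labelled examples already consistent with $C_2$, the full teaching sets $T(C_1 \cup C_2)$ and $T(C_1' \cup C_2)$ would each be consistent with both $C_1 \cup C_2$ and $C_1' \cup C_2$, producing a clash in $T$---a contradiction. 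Hence $\sum_{C_1}|T^{(1)}(C_1, C_2)| \ge |\cC_1|\cdot\ANCTD(\cC_1)$ for every $C_2$, and by a symmetric argument for each fixed $C_1$ the analogous bound holds for $T^{(2)}$. Summing both bounds over the remaining coordinate gives $\sum_{C_1, C_2}|T(C_1 \cup C_2)| \ge |\cC_1||\cC_2|\bigl(\ANCTD(\cC_1) + \ANCTD(\cC_2)\bigr)$, and dividing by $|\cC_1||\cC_2|$ completes the argument.

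The main obstacle is the non-clashing restriction argument for $C_1 \mapsto T^{(1)}(C_1, C_2)$---it is short but is the entire reason why the \emph{average} teaching complexity behaves additively rather than merely sub-additively. Each row and column of the ``product'' teacher for $\cC_1\sqcup\cC_2$ restricts to a legitimate non-clashing teacher on the factor class, so summing these row- and column-wise lower bounds double-counts in exactly the right way to produce an additive rather than a looser bound, thereby distinguishing the behaviour of $\ANCTD$ from that of $\NCTD$ under free combination.
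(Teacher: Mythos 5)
Your proposal is correct and follows essentially the same route as the paper: the upper bound is inherited from the sub-additivity argument of Lemma~\ref{lem:nctd-subadditive}, and the lower bound rests on the observation that for each fixed $C_2$ the restriction $C_1 \mapsto T(C_1 \cup C_2)\cap(\cX_1\times\{0,1\})$ is a non-clashing teacher mapping on $\cC_1$ (and symmetrically), after which the row- and column-wise bounds are summed. Your explicit justification of why that restriction is non-clashing is a welcome elaboration of a step the paper only asserts.
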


\begin{proof}
The proof of Lemma~\ref{lem:nctd-subadditive} above shows that ANCTD acts 
sub-additively on $\sqcup$, 
that is\\ 
$\ANCTD(\cC_1\sqcup\cC_2) \le \ANCTD(\cC_1)+\ANCTD(\cC_2)$. 
It remains to show that 
\begin{equation} \label{eq:anctd-sup-additive}
\ANCTD(\cC_1\sqcup\cC_2) \ge \ANCTD(\cC_1)+\ANCTD(\cC_2) \enspace .
\end{equation}

To this end, let $\cX_1$ (resp. $\cX_2$) be the domain of concept class
${\cal C}_1$ (resp. ${\cal C}_2$)
and suppose that $T$ is a non-clashing teacher mapping 
	on $\cC$ such that $\ANCTD(\cC) = \frac{1}{|\cC|}\sum_{C\in\cC}|T(C)|$.
	The following calculation makes use of the fact, for every
	fixed choice of $C_2\in\cC_2$, the mapping 
	$C_1 \mapsto T(C_1 \cup C_2)\cap\cX_1$ is a non-clashing teacher
	mapping on $\cC_1$ (and an analogous remark holds, for reasons of
	symmetry, when the roles of $\cC_1$ and $\cC_2$ are exchanged):
	\begin{eqnarray*}
		\ANCTD(\cC) & = & 
			\frac{1}{|\cC_1|\cdot|\cC_2|}
		      \sum_{C_1\in\cC_1}\sum_{C_2\in\cC_2}|T(C_1 \cup C_2|) \\
		      & = & \frac{1}{|\cC_2|}\sum_{C\in\cC_2} 
\left(\frac{1}{|\cC_1|} \sum_{C\in\cC_1}|T(C_1 \cup C_2)\cap\cX_1|\right) \\
		&& \mbox{} + \frac{1}{|\cC_1|}\sum_{C\in\cC_1}
\left(\frac{1}{|\cC_2|} \sum_{C\in\cC_2}|T(C_1 \cup C_2)\cap\cX_2|\right) \\
        & \ge  &  \ANCTD(\cC_1)+\ANCTD(\cC_2)
			\enspace . 
	\end{eqnarray*}

\end{proof}


{\remark
{\rm In Lemma~\ref{lem:nctd-subadditive}, if $T_1$ and $T_2$ are positive non-clashing mappings, then the same proof shows that $T$ (a positive non-clashing mapping) witnesses the fact that $\NCTD^+$ also acts sub-additively on $\sqcup$, i.e., 
\begin{equation} \label{eq:nctd+-subadditive}
\NCTD^+(\cC_1\sqcup\cC_2) \le \NCTD^+(\cC_1)+\NCTD^+(\cC_2) \enspace .
\end{equation}
Furthermore, since $\sqcup$ is associative, it follows immediately that,
for any concept class $\cC$, 
if $\cC^k := \cC_1 \sqcup\ldots\sqcup \cC_k$, where 
$\cC_i := \{C \times \{i\}|\ C\in\cC\}$ for $i=1,\ldots,k$, then
\begin{equation}
\ANCTD(\cC^k) = k \cdot \ANCTD(\cC)
\end{equation}
and 
\begin{equation} \label{eq:concepts-power}
\NCTD(\cC^k) \le k \cdot \NCTD(\cC) \; {\rm and} \;
\NCTD^+(\cC^k) \le k \cdot \NCTD^+(\cC)
\enspace .
\end{equation}
}
}

We have already seen, in Remark~\ref{rem:powerset}, that  $\ANCTD(\cP_m) =  m/2 $, $\NCTD^+(\cP_m) = m$ and $\NCTD(\cP_m) \ge  \lceil m/2 \rceil $,
where $\cP_m$ denotes the powerset over the domain $\{x_1,\ldots,x_m\}$.
The sub-additivity results above can be applied in order to determine $\NCTD(\cP_m)$ exactly as well. 



\begin{theorem} \label{th:powerset}
Let $\cP_m$ be the powerset over the domain $\{x_1,\ldots,x_m\}$.
Then 
$\NCTD(\cP_m) =  \lceil m/2 \rceil $.
\end{theorem}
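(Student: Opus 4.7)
The lower bound $\NCTD(\cP_m) \ge \lceil m/2 \rceil$ has already been established in Remark~\ref{rem:powerset} via Theorem~\ref{th:degree-lb} and the fact that $\deg_{avg}(\cP_m) = m$, so the plan reduces to proving the matching upper bound $\NCTD(\cP_m) \le \lceil m/2 \rceil$.

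The approach is to exploit sub-additivity of $\NCTD$ under the free combination $\sqcup$, together with the explicit non-clashing teacher mapping for $\cP_2$ exhibited in Remark~\ref{rem:powerset}. Observe that $\cP_m$ decomposes as a free combination over a partition of $\{x_1,\ldots,x_m\}$ into pairs (and possibly one singleton if $m$ is odd). Concretely, if $m = 2k$ is even, then $\cP_m = \cP_2 \sqcup \cP_2 \sqcup \cdots \sqcup \cP_2$ ($k$ factors, one per pair $\{x_{2i-1}, x_{2i}\}$); if $m = 2k+1$ is odd, then $\cP_m = \cP_2 \sqcup \cdots \sqcup \cP_2 \sqcup \cP_1$ ($k$ copies of $\cP_2$ together with one $\cP_1$). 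Here $\cP_1$ is trivially taught with one example per concept, so $\NCTD(\cP_1) = 1$.

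Applying Lemma~\ref{lem:nctd-subadditive} iteratively (which is valid since $\sqcup$ is associative), one obtains $\NCTD(\cP_{2k}) \le k \cdot \NCTD(\cP_2) = k$ in the even case and $\NCTD(\cP_{2k+1}) \le k\cdot \NCTD(\cP_2) + \NCTD(\cP_1) = k+1$ in the odd case. In either parity this equals $\lceil m/2 \rceil$, matching the lower bound. Combined with the inequality from Remark~\ref{rem:powerset}, this yields $\NCTD(\cP_m) = \lceil m/2 \rceil$.

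There is no real obstacle here: the content of the theorem lies entirely in having identified the right decomposition and in the prior establishment of $\NCTD(\cP_2) = 1$ via an explicit mapping. The only thing to verify carefully is that the free-combination decomposition of $\cP_m$ is legitimate, which follows because the powerset over a disjoint union of domains equals the $\sqcup$-combination of the powersets over the parts.
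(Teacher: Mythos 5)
Your proof is correct and follows essentially the same route as the paper: decompose $\cP_m$ as a free combination of copies of $\cP_2$ and apply sub-additivity together with $\NCTD(\cP_2)=1$. The only cosmetic difference is in the odd case, where the paper reduces to the even case while you append an explicit $\cP_1$ factor with $\NCTD(\cP_1)=1$; both yield $\lceil m/2\rceil$.
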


\begin{proof}
It remains to show that $\NCTD(\cP_m) \le \lceil m/2 \rceil$.
It suffices to verify this upper bound for even $m$. 
But, when $m$ is even,
$\NCTD(\cP_m) = \NCTD(\cP_2^{m/2}) \le m/2$ follows
from~(\ref{eq:concepts-power}) and the fact that 
$\NCTD(\cP_2)=1$ (cf. Remark~\ref{rem:powerset}).
 
\end{proof}



Since the $\NCTD$ of any concept class over a domain $\cX$ is trivially upper bounded by the 
$\NCTD$ of the powerset over $\cX$,
this result in particular implies that $\lceil|\cX|/2\rceil$ is an upper bound on the $\NCTD$ of any concept class over a domain $\cX$.

A further consequence of Theorem~\ref{th:powerset} is that $\NCTD$ is 
sometimes strictly subadditive
with respect to free combination, i.e., that inequality~(\ref{eq:nctd-subadditive}) is sometimes strict. 
 An example for that is the free combination $\cP_m\sqcup\cP_m$ of two copies of $\cP_m$ for odd $m$. Since the domain of $\cP_m\sqcup\cP_m$ has size $2m$, we obtain $\NCTD(\cP_m\sqcup\cP_m)=m$, while $\NCTD(\cP_m)+\NCTD(\cP_m)=2\lceil\frac{m}{2}\rceil = 2\frac{m+1}{2}=m+1$.



Another situation (that we will exploit later) where 
$\NCTD^+$ acts strictly additively on $\sqcup$, is captured in the following:
\begin{lemma}\label{lem:powercombination+}
Let $\cP_m$ be the powerset over the domain $\{x_1,\ldots,x_m\}$ and
let $\cC$ be a concept class with domain $\cX$ disjoint from $\{x_1, \ldots, x_{m} \}$.
Then, 
\[
\NCTD^+(\cP_{m} \sqcup \cC) = \NCTD^+(\cP_{m}) + \NCTD^+(\cC).
\]
\end{lemma}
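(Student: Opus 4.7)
\medskip

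\noindent\textbf{Proof plan.} The upper bound $\NCTD^+(\cP_m\sqcup\cC)\le m+\NCTD^+(\cC)$ is immediate from the sub-additivity statement~(\ref{eq:nctd+-subadditive}) combined with the fact, established in Remark~\ref{rem:powerset}, that $\NCTD^+(\cP_m)=m$. The substance of the lemma therefore lies in the matching lower bound $\NCTD^+(\cP_m\sqcup\cC)\ge m+\NCTD^+(\cC)$, which I plan to establish by exhibiting, for an arbitrary positive non-clashing teacher mapping $T$ on $\cP_m\sqcup\cC$, a single concept whose teaching set has size at least $m+\NCTD^+(\cC)$.

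The witnessing concepts will be drawn from the ``top slice'' $\{X_m\cup C:C\in\cC\}$, where $X_m=\{x_1,\ldots,x_m\}$. For each $C\in\cC$, write $T(X_m\cup C)=A_C\cup B_C$ with $A_C\seq\{x_1,\ldots,x_m\}\times\{1\}$ and $B_C\seq\cX\times\{1\}$. The first key step is to verify that the induced mapping $C\mapsto B_C$ is a positive non-clashing teacher mapping for $\cC$ alone: if it clashed on some pair $C\ne C'$, then because $A_C$ and $A_{C'}$ are positive samples on $\{x_1,\ldots,x_m\}$ and hence consistent with $X_m$, the original $T$ would clash on $X_m\cup C$ and $X_m\cup C'$. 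Consequently there exists $C^*\in\cC$ with $|B_{C^*}|\ge\NCTD^+(\cC)$.

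The second key step is to show that $A_{C^*}$ must be the complete positive sample $\{x_1,\ldots,x_m\}\times\{1\}$, so that $|A_{C^*}|=m$. Suppose towards contradiction that $(x_i,1)\notin A_{C^*}$ for some $i$. Then $T(X_m\cup C^*)=A_{C^*}\cup B_{C^*}$ is consistent with the smaller neighbor $D:=(X_m\sm\{x_i\})\cup C^*$, because $A_{C^*}$ avoids $x_i$ and $B_{C^*}$ is consistent with $C^*\seq D$. Conversely, $T(D)$ is a positive sample on $(X_m\sm\{x_i\})\cup C^*\subseteq X_m\cup C^*$, hence consistent with $X_m\cup C^*$. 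This produces a clash of $T$ on the distinct concepts $X_m\cup C^*$ and $D$, contradicting that $T$ is non-clashing. Combining the two steps yields
\[
\TC(T,\cP_m\sqcup\cC)\ \ge\ |T(X_m\cup C^*)|\ =\ |A_{C^*}|+|B_{C^*}|\ \ge\ m+\NCTD^+(\cC),
\]
and minimising over $T$ gives the desired lower bound.

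The only delicate point—and the part I expect to require the most care in the write-up—is the consistency bookkeeping in the clash argument of the previous paragraph: one must verify both consistency directions, using in an essential way that the teaching sets are \emph{positive} (so the labels on $\{x_1,\ldots,x_m\}$ can never rule out the superset $X_m$) and that the domains $\{x_1,\ldots,x_m\}$ and $\cX$ are disjoint (so $A_C$ and $B_C$ do not interact). Once this is done cleanly, sub-additivity supplies the opposite inequality and the lemma follows.
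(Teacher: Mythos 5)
Your proof is correct and follows essentially the same route as the paper's: sub-additivity for the upper bound, and for the lower bound the observation that every top-slice concept $X_m\cup C$ must receive all $m$ positive examples from $\{x_1,\ldots,x_m\}$ (your inline clash argument with the smaller neighbor is exactly the paper's appeal to Theorem~\ref{th:degree+-lb}) together with the fact that the $\cX$-restriction of $T$ on the top slice is a positive non-clashing mapping for $\cC$. Your write-up merely makes explicit, via the witness $C^*$, what the paper phrases as a contradiction.
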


\begin{proof}
By (\ref{eq:nctd+-subadditive}) it suffices to show that
$\NCTD^+(\cP_{m} \sqcup \cC) \ge \NCTD^+(\cP_{m}) + \NCTD^+(\cC)$.
Theorem~\ref{th:degree+-lb} implies that, for each  
$C_i \in \cC$, any positive non-clashing mapping $T$ for $\cP_{m} \sqcup \cC$ must use  $m = \NCTD^+(\cP_{m})$ 
examples from $\{x_1, \ldots, x_{m} \}$ to teach the single concept 
$\{x_1, \ldots, x_{m} \} \sqcup C_i$ within the concept class $\cP_{m} \sqcup C_i$. So the only way that $T$ could use fewer than  $m+ \NCTD^+(\cC)$ examples in total for each concept in $\{x_1, \ldots, x_{m} \} \sqcup \cC$
is if each such concept is taught with exactly $m$ examples from $\{x_1, \ldots, x_{m} \}$, and hence fewer than $\NCTD^+(\cC)$ examples from $\cX$, a contradiction. 
\end{proof}


Furthermore, it is easily seen that the average degree acts additively on $\sqcup$:

\begin{lemma}
Let $\cC_1$ and $\cC_2$ be concept classes over disjoint
and finite domains. Then the following holds: 
\begin{equation} \label{eq:avg-degree-add}
\deg_{avg}(\cC_1\sqcup\cC_2) = \deg_{avg}(\cC_1)+\deg_{avg}(\cC_2)
\enspace .
\end{equation}
\end{lemma}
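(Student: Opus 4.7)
The plan is to exploit the disjointness of the two domains, which forces any single-instance difference between concepts in $\cC_1\sqcup\cC_2$ to lie entirely in one of the two sub-domains. This will let me decompose degrees additively at the level of individual concepts, and then averaging will give the result by elementary double counting.

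First I would observe that, since $\cX_1$ and $\cX_2$ are disjoint, the map $(C_1,C_2)\mapsto C_1\cup C_2$ is a bijection from $\cC_1\times\cC_2$ onto $\cC_1\sqcup\cC_2$; in particular $|\cC_1\sqcup\cC_2|=|\cC_1|\cdot|\cC_2|$. Next, fix $C=C_1\cup C_2\in\cC_1\sqcup\cC_2$ and consider any neighbor $C'=C_1'\cup C_2'$ of $C$ in $\cC_1\sqcup\cC_2$. The single instance in $C\Delta C'$ belongs to exactly one of $\cX_1$ or $\cX_2$: if it lies in $\cX_1$, then $C_2'=C_2$ and $C_1'$ is a neighbor of $C_1$ in $\cC_1$; if it lies in $\cX_2$, the symmetric statement holds. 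Conversely, every neighbor of $C_1$ in $\cC_1$ (respectively, of $C_2$ in $\cC_2$), combined with $C_2$ (respectively, $C_1$), yields a distinct neighbor of $C$ in $\cC_1\sqcup\cC_2$. Hence
\[
\deg_{\cC_1\sqcup\cC_2}(C_1\cup C_2)=\deg_{\cC_1}(C_1)+\deg_{\cC_2}(C_2).
\]

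From here, the averaging step is a routine interchange of summation:
\begin{eqnarray*}
\deg_{avg}(\cC_1\sqcup\cC_2) &=& \frac{1}{|\cC_1|\cdot|\cC_2|}\sum_{C_1\in\cC_1}\sum_{C_2\in\cC_2}\bigl(\deg_{\cC_1}(C_1)+\deg_{\cC_2}(C_2)\bigr)\\
&=& \frac{1}{|\cC_1|}\sum_{C_1\in\cC_1}\deg_{\cC_1}(C_1)+\frac{1}{|\cC_2|}\sum_{C_2\in\cC_2}\deg_{\cC_2}(C_2)\\
&=& \deg_{avg}(\cC_1)+\deg_{avg}(\cC_2).
\end{eqnarray*}

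There is no real obstacle here; the only thing that needs care is the per-concept decomposition of degree, which relies crucially on the disjointness of $\cX_1$ and $\cX_2$ (so that a symmetric-difference of size one cannot straddle the two sub-domains, and so that the union representation of each concept is unique). Once that is in place, the rest is a straightforward double sum.
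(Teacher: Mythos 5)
Your proof is correct and follows essentially the same route as the paper's: both establish the per-concept identity $\deg_{\cC_1\sqcup\cC_2}(C_1\cup C_2)=\deg_{\cC_1}(C_1)+\deg_{\cC_2}(C_2)$ by noting that neighbors must differ on an instance lying entirely in one sub-domain, and then conclude by summing over $\cC_1\times\cC_2$ and dividing by $|\cC_1|\cdot|\cC_2|$. Your version is slightly more explicit about the bijection between $\cC_1\times\cC_2$ and $\cC_1\sqcup\cC_2$, which the paper takes for granted.
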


\begin{proof}
Let $\cC := \cC_1\sqcup\cC_2$. The concepts in $\cC$ that
are neighbors of $C_1 \cup C_2 \in \cC$ are precisely the
concepts of the form $C_1 \cup C'_2$ or $C'_1 \cup C_2$
where $C'_2$ is a neighbor of $C_2$ in $\cC_2$ and $C'_1$
is a neighbor of $C_1$ in $\cC_1$. Hence
\[ 
\deg_\cC(C_1 \cup C_2) = 
\deg_{\cC_1}(C_1) + \deg_{\cC_2}(C_2)
\enspace .
\]
Moreover $|\cC| = |\cC_1|\cdot|\cC_2|$. It follows that
\[
\sum_{C\in\cC}\deg_\cC(C) = 
\sum_{C_1\in\cC_1}\sum_{C_2\in\cC_2}\deg_\cC(C_1 \cup C_2) =
|\cC_2|\cdot\sum_{C_1\in\cC_1}\deg_{\cC_1}(C_1) +
|\cC_1|\cdot\sum_{C_2\in\cC_2}\deg_{\cC_2}(C_2) \enspace .
\]
Division by $|\cC_1|\cdot|\cC_2|$ immediately
yields~(\ref{eq:avg-degree-add}).  
\end{proof}

The free combination of classes with a tight degree lower 
bound is again a class with a tight degree lower bound:

\begin{corollary} \label{cor:tight-lb}
Let $\cC_1$ and $\cC_2$ be two concept classes over disjoint
and finite domains, and let $\cC=\cC_1\sqcup\cC_2$. 
Then $\NCTD(\cC_i)=\frac{1}{2} \cdot \deg_{avg}(\cC_i)$
for $i=1,2$ implies that $\NCTD(\cC) = \frac{1}{2} \cdot \deg_{avg}(\cC)$.
\end{corollary}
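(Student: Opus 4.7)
The plan is to sandwich $\NCTD(\cC_1\sqcup\cC_2)$ between matching upper and lower bounds, both of which evaluate to $\frac{1}{2}\deg_{avg}(\cC)$ under the hypothesis. The two ingredients are already in place: sub-additivity of $\NCTD$ under $\sqcup$ (Lemma~\ref{lem:nctd-subadditive}) for the upper bound, and the degree-based lower bound of Theorem~\ref{th:degree-lb}(ii), combined with the additivity of $\deg_{avg}$ under $\sqcup$ (equation~(\ref{eq:avg-degree-add})).

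First I would invoke Lemma~\ref{lem:nctd-subadditive} to obtain
\[
\NCTD(\cC) \;\le\; \NCTD(\cC_1) + \NCTD(\cC_2) \;=\; \tfrac{1}{2}\deg_{avg}(\cC_1) + \tfrac{1}{2}\deg_{avg}(\cC_2),
\]
where the equality uses the hypothesis. Then~(\ref{eq:avg-degree-add}) collapses the right-hand side to $\frac{1}{2}\deg_{avg}(\cC)$, giving the desired upper bound.

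For the matching lower bound, I would apply Theorem~\ref{th:degree-lb}(ii), which yields $\NCTD(\cC)\ge\lceil\frac{1}{2}\deg_{avg}(\cC)\rceil$. The only subtlety is ensuring that the ceiling here is superfluous, so that the lower bound exactly meets the upper bound. Since $\NCTD(\cC_i)$ is an integer and equal to $\frac{1}{2}\deg_{avg}(\cC_i)$ by hypothesis, each $\deg_{avg}(\cC_i)$ must be an even integer; by additivity, $\deg_{avg}(\cC)$ is therefore also an even integer, so $\frac{1}{2}\deg_{avg}(\cC)$ is already integral and the ceiling disappears.

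I do not anticipate any real obstacle: the corollary is essentially an accounting exercise tying together the two lemmas proved just before it. The one point that needs explicit mention (and which is the closest thing to a subtle step) is the parity observation that legitimizes dropping the ceiling in the lower bound. Once that is noted, the upper and lower bounds coincide and the equality $\NCTD(\cC)=\frac{1}{2}\deg_{avg}(\cC)$ follows.
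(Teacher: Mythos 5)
Your proof is correct and follows essentially the same route as the paper's: the chain $\NCTD(\cC)\le\NCTD(\cC_1)+\NCTD(\cC_2)=\frac{1}{2}\deg_{avg}(\cC_1)+\frac{1}{2}\deg_{avg}(\cC_2)=\frac{1}{2}\deg_{avg}(\cC)$ via Lemma~\ref{lem:nctd-subadditive} and equation~(\ref{eq:avg-degree-add}), closed off by the lower bound of Theorem~\ref{th:degree-lb}. Your parity discussion of the ceiling is fine but not strictly needed, since $\lceil x\rceil\ge x$ already gives $\NCTD(\cC)\ge\frac{1}{2}\deg_{avg}(\cC)$, which is all the sandwich requires.
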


\begin{proof}
The assertion is evident from the chain of 
inequalities:
\[ 
\NCTD(\cC) \stackrel{(\ref{eq:nctd-subadditive})}{\le} 
\NCTD(\cC_1)+\NCTD(\cC_2) = 
\frac{1}{2} \cdot \deg_{avg}(\cC_1) + 
\frac{1}{2} \cdot \deg_{avg}(\cC_2)  \stackrel{(\ref{eq:avg-degree-add})}{=} 
\frac{1}{2} \cdot \deg_{avg}(\cC) 
\enspace .
\]
and Theorem~\ref{th:degree-lb}.
\mbox{} 
\end{proof}

\section{Relation to Other Learning-theoretic Parameters}\label{sec:parameters}

In this section, we set $\NCTD$ in relation to $\PBTD$ and $\VCD$, as well as to the smallest possible size of a sample compression scheme for a given concept class.

\subsection{$\PBTD$ and $\VCD$}

Since preference-based teaching is collusion-free \citep{GaoRSZ17}, we obtain the following bounds.

\begin{proposition}\label{prop:ubPBTD}
Let $\cC$ be any concept class. Then $\NCTD(\cC)\le\PBTD(\cC)$ and $\NCTD^+(\cC)\le\PBTD^+(\cC)$.
\end{proposition}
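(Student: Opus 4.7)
The plan is to show that every preference-based teacher mapping is automatically non-clashing; the two desired inequalities then follow by minimizing over the admissible preference orders. Concretely, fix a strict order $\succ$ on $\cC$ witnessing $\PBTD(\cC,\succ)=\PBTD(\cC)$, together with a teacher mapping $T$ attaining this bound, so that for every $C\in\cC$, $T(C)$ is consistent with $C$ and $C\succ C'$ holds for every $C'\in\cC\setminus\{C\}$ that is consistent with $T(C)$.

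Suppose, for contradiction, that $T$ is not non-clashing. Then there exist distinct $C,C'\in\cC$ such that $T(C)$ is consistent with $C'$ and $T(C')$ is consistent with $C$. The first relation forces $C\succ C'$ (by clause~2 of Definition~\ref{def:pbtd}) and the second forces $C'\succ C$, contradicting the antisymmetry of the strict order $\succ$. Hence $T$ is non-clashing, which yields $\NCTD(\cC)\le\TC(T,\cC)=\PBTD(\cC,\succ)=\PBTD(\cC)$. The identical argument, run on an optimal witness for $\PBTD^+(\cC)$, uses only positive examples throughout; the resulting non-clashing teacher is itself positive, giving $\NCTD^+(\cC)\le\PBTD^+(\cC)$.

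A cleaner alternative, which avoids unpacking the preference structure by hand, is to invoke Theorem~\ref{thm:nc}: the learner that, on input $S$, returns the $\succ$-maximum concept in $\cC$ consistent with $S$ pairs with $T$ to form a successful and collusion-free pair $(T,L)$ in the sense of Definition~\ref{def:collusion} (since enlarging $T(C)$ by examples still consistent with $C$ cannot change the $\succ$-maximum consistent concept). Theorem~\ref{thm:nc} then immediately produces the non-clashing property of $T$. I do not expect a substantive obstacle; the only subtleties are noting that antisymmetry of $\succ$ is precisely what rules out a clash, and verifying that the positive case inherits positivity of the witnessing teacher without further work.
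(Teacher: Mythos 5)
Your proof is correct and takes essentially the same approach as the paper: the paper's entire argument is the one-line observation that preference-based teaching is collusion-free \citep{GaoRSZ17}, so that Theorem~\ref{thm:nc} yields the non-clashing property---exactly your second route. Your primary, direct argument (a clash forces both $C\succ C'$ and $C'\succ C$, contradicting asymmetry of the strict order) is just a correct, self-contained unpacking of that citation, and the positive case indeed goes through verbatim.
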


{\remark{
\rm 
The first inequality in Proposition~\ref{prop:ubPBTD} is
sometimes strict, as witnessed by Theorem~\ref{th:powerset}, which states that  $\NCTD(\cP_m)=\lceil m/2\rceil$. By comparison, $\PBTD(\cP_m)=m$. In particular, this yields a family of concept classes of strictly increasing $\NCTD$ for which $\PBTD$ exceeds $\NCTD$ by a factor of 2. 
\noindent
The fact that the second inequality in Proposition~\ref{prop:ubPBTD} is sometimes strict is witnessed by the simple class  
$\cC$ described in the introduction, with $\NCTD^+(\cC)=1$. 
Since no concept in $\cC$ has a positive teaching set of size 1, Proposition~\ref{prop:pbtdfinite} implies $\PBTD^+(\cC)=2$.
\noindent
In particular, these examples witness that Proposition~\ref{prop:pbtdfinite} does \emph{not}\/ hold for non-clashing teaching.

}}

Results from the literature can now be combined in a straightforward way in order to formulate an upper bound on $\NCTD$ in terms of the VC-dimension. 

\begin{proposition}
$\NCTD(\cC)$ is upper-bounded by a function quadratic in $\VCD(\cC)$.
\end{proposition}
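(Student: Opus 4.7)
The plan is to chain together three facts already available in the paper (plus one cited from the literature). First, by Proposition~\ref{prop:ubPBTD} we have the inequality
\[
\NCTD(\cC) \le \PBTD(\cC),
\]
so it suffices to bound $\PBTD(\cC)$ quadratically in $\VCD(\cC)$.

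Second, I would invoke the consequence, noted immediately after Proposition~\ref{prop:pbtdfinite}, that for finite concept classes $\PBTD$ coincides with the recursive teaching dimension $\RTD$. Since the excerpt has announced the standing assumption that $\cX$ and $\cC$ are finite, this identification applies without caveat, giving $\PBTD(\cC)=\RTD(\cC)$.

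Third, I would cite the result of \cite{HuWLW17}, already highlighted in the introduction of this paper, which establishes that $\RTD(\cC)$ is upper bounded by a function quadratic in $\VCD(\cC)$, say $\RTD(\cC) = O(\VCD(\cC)^2)$. Concatenating these three steps yields
\[
\NCTD(\cC) \;\le\; \PBTD(\cC) \;=\; \RTD(\cC) \;=\; O\!\left(\VCD(\cC)^2\right),
\]
which is exactly the claimed quadratic upper bound.

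There is essentially no main obstacle: all the hard work (notably the $\RTD$-versus-$\VCD$ bound of Hu et al.) has already been done in the cited literature, and the links $\NCTD \le \PBTD$ and $\PBTD = \RTD$ (in the finite case) are stated directly in the excerpt. The only thing to be careful about is the finiteness assumption needed for $\PBTD = \RTD$, but this is already in force by the blanket convention stated at the end of Section~3.
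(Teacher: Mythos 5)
Your proposal is correct and follows essentially the same chain as the paper: $\NCTD \le \PBTD$, then relate $\PBTD$ to $\RTD$, then invoke the quadratic bound of Hu et al. The only cosmetic difference is that the paper uses the inequality $\PBTD \le \RTD$ (cited from Gao et al.) rather than the finite-class equality $\PBTD = \RTD$, but both steps are valid and yield the same conclusion.
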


\begin{proof}
$\PBTD$ is known to lower-bound the recursive teaching dimension \citep{GaoRSZ17}. \cite{HuWLW17} proved that, when $\VCD(\cC)=d$, the recursive teaching dimension of $\cC$ is no larger than $39.3752\cdot d^2-3.6330\cdot d$. By Proposition~\ref{prop:ubPBTD}, the same upper bound applies to $\NCTD$.
\end{proof}

However, $\VCD$ can also be arbitrarily larger than $\NCTD$,
a result that follows immediately from the corresponding result for $\TD$:

\begin{proposition}[\cite{GK1995}]\label{prop:nctd<vcd}

Let $k\in\mathbb{N}$, $k\ge 1$. Then there exists a finite concept class $\cC$ such that
$\TD^+(\cC) = \TD(\cC) = 1$ and $\VCD(\cC) = k$. 
\end{proposition}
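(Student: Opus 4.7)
The plan is to exhibit, for each $k \ge 1$, an explicit finite concept class that simultaneously achieves teaching dimension one (even using only positive examples) and VC-dimension $k$. The intuition is that teaching dimension measures how economically one can single out each concept from the rest, whereas VC-dimension measures the combinatorial richness of the class on some subset of the domain; these two notions can be decoupled by attaching to each concept a private ``name tag'' instance on top of an arbitrarily shattered core.

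Concretely, I would fix an integer $k \ge 1$ and take the domain
\[
\cX = \{x_1,\ldots,x_k\} \cup \{y_S : S \subseteq \{x_1,\ldots,x_k\}\},
\]
where the two parts are disjoint. For each $S \subseteq \{x_1,\ldots,x_k\}$ I would define the concept $C_S = S \cup \{y_S\}$, and let $\cC = \{C_S : S \subseteq \{x_1,\ldots,x_k\}\}$. Thus $|\cC| = 2^k$, each concept contains exactly one ``tag'' element $y_S$, and tags are pairwise distinct across concepts.

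The verification then splits into two routine parts. For the teaching dimension, I would observe that the positive singleton sample $\{(y_S,1)\}$ is consistent only with $C_S$, since $y_S \in C_{S'}$ iff $S' = S$; hence $\TD^+(C_S,\cC) = 1$ for every $S$, giving $\TD^+(\cC) = 1$, and therefore also $\TD(\cC) \le \TD^+(\cC) = 1$ (with equality because $\cC$ has more than one concept). For the VC-dimension, I would first note that the set $\{x_1,\ldots,x_k\}$ is shattered: its intersection with $C_S$ is exactly $S$, and $S$ ranges over all subsets, so $\VCD(\cC) \ge k$. The matching upper bound is immediate from $|\cC| = 2^k$, since any shattered set of size $d$ forces at least $2^d$ distinct concepts, yielding $\VCD(\cC) \le k$.

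There is no substantive obstacle here; the only design choice is ensuring that the ``tag'' mechanism does not accidentally shatter more than $k$ instances, and this is automatic from the cardinality bound $|\cC| = 2^k$. The construction and argument are essentially those of Goldman and Kearns, adapted to record that the witnessing teaching set can be chosen purely positive, which is what gives the strengthened equality $\TD^+(\cC) = \TD(\cC) = 1$ rather than merely $\TD(\cC) = 1$.
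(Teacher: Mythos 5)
Your construction and verification are correct, and they are essentially the standard Goldman--Kearns construction (a shattered core of $k$ instances plus a private positive ``tag'' instance per concept) that the paper cites without reproducing a proof. Both halves of your argument --- the singleton positive teaching sets via the tags, and the cardinality bound $|\cC|=2^k$ pinning $\VCD$ at exactly $k$ --- are sound, so nothing further is needed.
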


So far, there is no concept class for which $\VCD$ is known to exceed $\NCTD$. Note that any such concept class would have to fulfill $\PBTD>\VCD$ as well. We tested those classes for which $\PBTD>\VCD$ is known from the literature, but found that all of them satisfy $\NCTD\le\VCD$. 

As an example, here we present ``Warmuth's class.'' This concept class, shown in Table~\ref{tab:warmuthsclass}, was communicated by Manfred Warmuth and proven by \cite{DarnstadtKSZ16} to be the smallest concept class for which $\PBTD$ exceeds $\VCD$.  
In particular, $\VCD(\cC_W)\!=\!2$ while $\PBTD(\cC_W)\!=\!3$.

\begin{table}[ht]
\centering
\begin{tabular}{c|ccccc||c|ccccc}
&$x_1$&$x_2$&$x_3$&$x_4$&$x_5$&&$x_1$&$x_2$&$x_3$&$x_4$&$x_5$\\\hline
$C_1$&\textbf{1}&0&0&0&\textbf{1}&$C'_1$&\textbf{1}&0&\textbf{1}&0&1\\
$C_2$&\textbf{1}&\textbf{1}&0&0&0&$C'_2$&1&\textbf{1}&0&\textbf{1}&0\\
$C_3$&0&\textbf{1}&\textbf{1}&0&0&$C'_3$&0&1&\textbf{1}&0&\textbf{1}\\
$C_4$&0&0&\textbf{1}&\textbf{1}&0&$C'_4$&\textbf{1}&0&1&\textbf{1}&0\\
$C_5$&0&0&0&\textbf{1}&\textbf{1}&$C'_5$&0&\textbf{1}&0&1&\textbf{1}\\
\end{tabular}
\caption{Warmuth's class $\cC_W$, with the highlighted entries (in bold) corresponding to the images of a positive non-clashing teacher mapping. The domain of this class is $\{x_1,\ldots, x_5\}$, and it contains 10 concepts, named $C_1$ through $C_5$ and $C'_1$ through $C'_5$.}
\label{tab:warmuthsclass}
\end{table}

\begin{proposition}\label{prop:CW}
 $\NCTD(\cC_W)=\NCTD^+(\cC_W)=2$. 
\end{proposition}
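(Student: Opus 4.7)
The upper bound $\NCTD^+(\cC_W) \le 2$ (and hence $\NCTD(\cC_W) \le 2$) is certified directly by the positive teacher mapping $T$ whose teaching sets are encoded by the bold entries in Table~\ref{tab:warmuthsclass}: each concept is assigned a two-element positive sample, and a routine pairwise check verifies that no two concepts are simultaneously consistent with the union of their samples. Since every positive non-clashing mapping is a non-clashing mapping, this also gives $\NCTD(\cC_W)\le\NCTD^+(\cC_W)\le 2$.

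For the matching lower bound $\NCTD(\cC_W) \ge 2$, I would argue by contradiction, assuming a non-clashing teacher $T$ with $|T(C)| = 1$ for every $C \in \cC_W$. Since $|\cC_W| = 10 = 2|\cX|$, Theorem~\ref{th:ub-size} is tight and forces $T$ to be a bijection between $\cC_W$ and $\cX \times \{0, 1\}$; let $C^+_i$ (resp.\ $C^-_i$) denote the unique concept assigned to $(x_i, 1)$ (resp.\ $(x_i, 0)$). Define oriented graphs $H_+, H_-$ on $\{1, \ldots, 5\}$ by placing an arc $i \to k$ whenever $x_k \in C^+_i$ (resp.\ $x_k \notin C^-_i$); the non-clashing property for like-labelled pairs forbids opposite arcs, so summing out-degrees yields $\sum_i |C^+_i| \le 15$ and $\sum_i |C^-_i| \ge 10$. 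An independent ordered-pair count for the cross non-clashing constraint (for $i \ne j$, not both $x_j \notin C^+_i$ and $x_i \in C^-_j$), combined with $\sum_{C \in \cC_W} |C| = 25$, supplies the reverse inequality $\sum_i |C^+_i| \ge 15$. Tightness in both bounds then forces $|C^+_i| = 3$ and $|C^-_i| = 2$ for every $i$, so the positive slots are filled by $\{C'_1, \ldots, C'_5\}$ and the negative slots by $\{C_1, \ldots, C_5\}$, and both $H_+$ and $H_-$ become regular tournaments with every vertex of out-degree $2$.

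Writing $C^+_i = C'_{\sigma(i)}$ and $C^-_j = C_{\tau(j)}$ for permutations $\sigma, \tau$ of $\{1, \ldots, 5\}$, consistency gives $\sigma(i) - i \pmod 5 \in \{0, 1, 3\}$ and $\tau(j) - j \pmod 5 \in \{2, 3, 4\}$. A finite case analysis on the cyclic difference sequences, using the tournament constraints on $H_+$ and $H_-$, shows that $\sigma$ must be either the identity or the shift $i \mapsto i+1$, and $\tau$ must be either the shift $j \mapsto j+2$ or $j \mapsto j+4$. The final obstruction comes from the five neighbor pairs of $\cC_W$: since $\{C_i, C'_i\}$ differ only at the instance $x_{i+2}$, at least one of $T(C_i) = (x_{i+2}, 0)$ and $T(C'_i) = (x_{i+2}, 1)$ must hold (otherwise the two concepts clash under $T$); for either admissible choice of $\sigma$ the second option fails (since $\sigma(i+2) \ne i$ in both cases), so $\tau(i+2) = i$ is forced for every $i$, making $\tau$ the shift $j \mapsto j-2 \equiv j+3 \pmod 5$, which contradicts the admissible values of $\tau$. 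Combined with the upper bound, this yields $\NCTD(\cC_W) = \NCTD^+(\cC_W) = 2$. The main technical hurdle is the tournament case analysis for $\sigma$ and $\tau$; the five-fold cyclic symmetry of $\cC_W$ keeps the enumeration short.
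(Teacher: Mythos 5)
Your proof is correct, but it takes a genuinely different route from the paper's for the lower bound. The paper argues directly and locally: since $C_1$ and $C_1'$ differ only on $x_3$, any order-$1$ non-clashing $T$ must set $T(C_1)=\{(x_3,0)\}$ or $T(C_1')=\{(x_3,1)\}$, and each case propagates through two more forced assignments (via $C_2,C_2',C_5,C_5'$, resp.\ $C_3',C_4'$) to an immediate clash --- about six lines in total. You instead establish global structure first: normal form plus the counting bound of Theorem~\ref{th:ub-size} force $T$ to be a bijection onto $\cX\times\{0,1\}$; the no-clash condition within the positively- and negatively-taught concepts gives the tournament/out-degree bounds $\sum_i|C_i^+|\le 15$ and $\sum_j|C_j^-|\ge 10$, and together with $\sum_C|C|=25$ (or your cross-pair count) these are tight, so the size-$3$ concepts get the positive examples and the size-$2$ concepts the negative ones; you then classify the admissible assignment permutations and derive a contradiction from the five neighbor pairs. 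I checked the pieces you assert: the consistency conditions $\sigma(i)-i\in\{0,1,3\}$ and $\tau(j)-j\in\{2,3,4\}$ are right for this class ($C_k'=\{x_k,x_{k+2},x_{k+4}\}$, $C_k=\{x_{k-1},x_k\}$, $C_i\,\Delta\,C_i'=\{x_{i+2}\}$), and the tournament constraint does reduce $\sigma$ to the identity or the $+1$ shift (e.g.\ $\sigma(1)=4$ fails on the pair $(2,4)$), after which the neighbor pairs force $\tau=+3$, which clashes on consecutive indices. The one soft spot is that the ``finite case analysis'' classifying $\sigma$ and $\tau$ is asserted rather than written out; it is short and verifiable, but it is also the step carrying most of the weight, so it should be made explicit. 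What your approach buys is a structural explanation (and a template that could apply to other classes with this kind of cyclic symmetry); what it costs is length --- the paper's ad hoc chase reaches the same contradiction far more economically.
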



\begin{proof}
The highlighted labels in Table~\ref{tab:warmuthsclass} correspond to a positive non-clashing mapping for $\cC_W$, which immediately shows that $\NCTD^+(\cC_W)\le 2$ and thus $\NCTD(\cC_W)\le 2$. To show that $\NCTD(\cC_W)\ge 2$, suppose by way of contradiction that $\NCTD(\cC_W)=1$. Then there is a non-clashing teacher mapping $T$ that assigns every concept in $\cC_W$ a teaching set of size 1.

Since $C_1$ and $C'_1$ differ only on the instance $x_3$, the mapping $T$ must fulfill either $T(C_1)=\{(x_3,0)\}$ or $T(C'_1)=\{(x_3,1)\}$.

Case 1. $T(C_1)=\{(x_3,0)\}$. Since $C_2$ is consistent with $T(C_1)$, the teaching set for $C_2$ must be inconsistent with $C_1$. In particular, $T(C_2)\ne\{(x_4,0)\}$. This implies $T(C'_2)=\{(x_4,1)\}$, since $x_4$ is the only instance on which $C_2$ and $C'_2$ disagree. By an analogous argument concerning $C_5$ and $C'_5$, one obtains $T(C'_5)=\{(x_2,1)\}$. Now $T$ has a clash on $C'_2$ and $C'_5$, which is a contradiction.

Case 2. $T(C'_1)=\{(x_3,1)\}$. One argues as in Case 1, with $C'_3$ and $C'_4$ in place of $C_2$ and $C_5$, yielding $T(C_3)=\{(x_5,0)\}$ and $T(C_4)=\{x_1,0)\}$. This is a clash, resulting in a contradiction.

As both cases result in a contradiction, we have $\NCTD(\cC_W)>1$ and thus $\NCTD(\cC_W)=2$. Since $\NCTD^+$ is an upper bound on $\NCTD$, we also have $\NCTD^+(\cC_W)=2$.
\end{proof}


While the general relationship between $\NCTD$ and $\VCD$ remains open, it turns out that $\NCTD(\cC)$ is upper-bounded by $\VCD(\cC)$ when $\cC$ is a finite \emph{maximum}\/ class. For a finite instance space $\cX$, a concept class $\cC$ of VC dimension $d$ is called maximum if its size $|\cC|$ meets Sauer's upper bound $\sum_{i=0}^d{|\mathcal{X}|\choose i}$ \citep{Sau1972} with equality. Recently, \cite{ChalopinCMW18} showed that every finite maximum class $\cC$ admits a so-called \emph{representation map}, i.e., a function $r$ that maps every concept in $\cC$ to a set of at most $d(=\VCD(\cC))$ instances, in a way that no two distinct concepts $C,C'\in\cC$ both agree on all the instances in $r(C)\cup r(C')$. By definition, any representation map is, translated into our setting, simply a non-clashing teacher mapping of order $d$ for $\cC$. Therefore, the result by Chalopin et al.\ implies that $\NCTD(\cC)\le \VCD(\cC)$ for finite maximum $\cC$.

\subsection{Sample Compression}

Intuitively, a sample compression scheme \citep{LW1986} for a (possibly infinite) concept class $\cC$ provides a lossless compression of every set $S$ of labeled examples for any concept in $\cC$ in the form of a subset of $S$. It was proven that the existence of a finite upper bound on the size of the compression sets is equivalent to PAC-learnability, i.e., to finite VC-dimension~\citep{MoranY16,LW1986}. Open for over 30 years now is the question how closely such an upper bound can be related to the VC-dimension.

Formally, a sample compression scheme of size $k$ for a concept class $\cC$ over $\cX$ is a pair $(f,g)$ of mappings, where, for every sample set $S$ consistent with some concept $C\in\cC$, (i) $f$ maps $S$ to a subset $f(S)\subseteq S$ with $|f(S)|\le k$; and (ii) $g(f(S))$ maps the compressed set to a concept $C'$ over $\cX$ (not necessarily in $\cC$) that is consistent with $S$.
By $\CN(\cC)$ we denote the size of the smallest-size sample compression scheme for $\cC$.
The open question then is whether $\CN(\cC)$ is upper-bounded by (a function linear in) $\VCD(\cC)$. 

Some connections between sample compression and teaching have been established in the literature~\citep{DFSZ2014,DarnstadtKSZ16}. The non-clashing property bears some similarities to sample compression and has in fact been used in the context of \emph{unlabelled}\/ sample compression (in which $f(S)$ is an unlabelled set) \citep{KW2007,ChalopinCMW18}. It is thus natural to ask whether $\CN$ is an immediate upper or lower bound on $\NCTD$. Below, we answer this question negatively.

\begin{proposition} \label{prop:sc}
\mbox{}
\begin{enumerate}
    \item For every $k\in\mathbb{N}$, $k\ge 1$, there is a concept class $\cC$ such that $\NCTD(\cC)=\PBTD(\cC)=1$ but $\CN(\cC)>k$.
    \item Let $\cP_m$ be the powerset over a domain of size $m$, where $m\ge 5$ is odd. Then $\CN(\cP_m) < \NCTD(\cP_m)$ and $2\CN(\cP_m) <\PBTD(\cP_m)$.
\end{enumerate}
\end{proposition}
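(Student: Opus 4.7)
My plan splits into the two parts, each with its own approach.

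For part 1, I would use the classical class witnessing that teaching dimension can be arbitrarily smaller than VC-dimension (cf.\ Proposition~\ref{prop:nctd<vcd}). Concretely, for each $n$ take $\cC_n = \{T \cup \{y_T\} : T \subseteq \{1,\ldots,n\}\}$ over the domain $\{1,\ldots,n\} \cup \{y_T : T \subseteq \{1,\ldots,n\}\}$, where the $y_T$ are fresh instances. Every concept $T \cup \{y_T\}$ admits the positive teaching set $\{(y_T,1)\}$, so $\TD^+(\cC_n) = 1$ and hence $\NCTD(\cC_n) = \PBTD(\cC_n) = 1$. To lower-bound $\CN(\cC_n)$, I would observe that the full $\{1,\ldots,n\}$-samples $S_T = \{(i,1):i\in T\}\cup\{(i,0):i\in\{1,\ldots,n\}\setminus T\}$ for distinct $T$ are pairwise inconsistent, so any compression scheme of size $c$ must send them to pairwise distinct labeled subsets of $\{1,\ldots,n\}$ of size $\leq c$; otherwise the shared $g$-value would have to agree with two inconsistent samples. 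Since there are at most $(2n+1)^c$ such subsets (encode a labeled subset of size $\leq c$ as a length-$c$ word over the alphabet $\{1,\ldots,n\}\times\{0,1\}\cup\{*\}$), we obtain $c \geq n/\log_2(2n+1)$, which exceeds any preassigned $k$ once $n$ is large enough.

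For part 2, I would first record the two benchmark values: $\NCTD(\cP_m) = (m+1)/2$ for odd $m$ by Theorem~\ref{th:powerset}, and $\PBTD(\cP_m) = m$ by Proposition~\ref{prop:pbtdfinite} combined with the observation that every $C \in \cP_m$ has $\TD(C,\cP_m) = m$ (so the first concept in any recursive teaching order already uses $m$ examples). Both claimed inequalities then collapse to the single bound $\CN(\cP_m) \leq (m-1)/2$ for odd $m \geq 5$. I plan to prove this from three ingredients: (i) $\CN$ is subadditive under $\sqcup$, which follows by applying each component scheme to the restriction of the sample to its own domain and concatenating, in exact analogy with Lemma~\ref{lem:nctd-subadditive}; (ii) a direct check of $\CN(\cP_2) = 1$ by finite case analysis on the four concepts of $\cP_2$; and (iii) an explicit construction of a size-$2$ compression scheme for $\cP_5$. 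Writing $\cP_m = \cP_5 \sqcup \cP_2^{(m-5)/2}$ for odd $m \geq 5$ and chaining (i)--(iii) then yields $\CN(\cP_m) \leq 2 + (m-5)/2 = (m-1)/2$.

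The hard part is ingredient (iii), a direct size-$2$ compression scheme for $\cP_5$. Counting allows it, since the $32$ concepts of $\cP_5$ have $1 + 10 + 40 = 51$ potential size-$\leq 2$ labeled compressions. My plan is to choose $g$-values on these $51$ compressions roughly as follows: the all-positive pair $\{(x_i,1),(x_j,1)\}$ decodes to $\{x_i,x_j\}$ (covering the ten size-$2$ concepts); the all-negative pair $\{(x_i,0),(x_j,0)\}$ decodes to $\{x_1,\ldots,x_5\}\setminus\{x_i,x_j\}$ (covering the ten size-$3$ concepts); the singletons $\{(x_i,1)\}$ and $\{(x_i,0)\}$ decode to $\{x_i\}$ and $\{x_1,\ldots,x_5\}\setminus\{x_i\}$ respectively (covering the five size-$1$ and five size-$4$ concepts); and $\emptyset$ and $\{x_1,\ldots,x_5\}$ are handled by two reroutes (for example, setting $g(\{(x_1,1)\}) = \{x_1,\ldots,x_5\}$ and rerouting $\{x_1\}$ to a mixed pair such as $\{(x_1,1),(x_2,0)\}$, while $g(\emptyset)=\emptyset$ absorbs the all-$0$ sample). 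With these assignments every partial sample $S$ can be routed to a size-$\leq 2$ subset $T \subseteq S$ whose $g$-value is consistent with $S$; the surplus of unused mixed pairs absorbs the residual partial samples. Verifying this is a finite, if tedious, case split over samples of sizes $0,1,\ldots,5$.
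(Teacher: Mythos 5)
Your Part 1 is correct, though it takes a different route from the paper: the paper simply takes the Goldman--Kearns class with $\TD=1$ and $\VCD=5k$ (Proposition~\ref{prop:nctd<vcd}) and invokes the Floyd--Warmuth lower bound $\CN(\cC)>\VCD(\cC)/5$, whereas you give a self-contained counting argument ($2^n$ pairwise inconsistent full samples must compress injectively into at most $(2n+1)^c$ labeled subsets, so $c\ge n/\log_2(2n+1)$). Your bound is weaker but entirely elementary, which is a reasonable trade-off. The framework of your Part 2 is also sound: $\NCTD(\cP_m)=(m+1)/2$ and $\PBTD(\cP_m)=m$ are established as you say, subadditivity of $\CN$ under $\sqcup$ goes through (split the sample by domain, compress each half, decompress componentwise), and $\CN(\cP_2)=1$ checks out. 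The paper, by contrast, does not reprove the compression bound at all; it cites the result $\CN(\cP_m)\le\lfloor m/2\rfloor$ of Darnst\"adt et al.

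The gap is in your ingredient (iii), the explicit size-$2$ scheme for $\cP_5$. As specified, it fails on all-positive partial samples of size at least $3$ that avoid $x_1$: take $S=\{(x_2,1),(x_3,1),(x_4,1)\}$. Its subsets of size at most $2$ are $\emptyset$, the three positive singletons, and the three all-positive pairs, which under your assignment decode to $\emptyset$, $\{x_i\}$, and $\{x_i,x_j\}$ respectively --- none of which is consistent with $S$ (each omits at least one instance labeled positive in $S$). Your reroute $g(\{(x_1,1)\})=\{x_1,\ldots,x_5\}$ rescues only the samples containing $(x_1,1)$, and ``unused mixed pairs'' cannot help because $S$ contains no negative example, hence no mixed pair as a subset. (All-negative samples are fine, since $g(\emptyset)=\emptyset$ absorbs them; the asymmetry is exactly the problem.) So the decoding rule ``an all-positive pair decodes to that pair'' is incompatible with a valid scheme, and a genuinely different assignment is needed. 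A size-$2$ scheme for $\cP_5$ does exist --- this is precisely the cited result of Darnst\"adt et al.\ --- but its construction is not the naive one, and your proof is incomplete until you either exhibit a working scheme and verify it, or fall back on the citation as the paper does.
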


\begin{proof}
Statement 1 is due to Remark~\ref{prop:nctd<vcd}, which implies the existence of a concept class $\cC$ with $\NCTD(\cC)=\PBTD(\cC)=1$ and $\VCD(\cC)=5k$. Then $\CN(\cC)>k$ follows from a result by \cite{FW1995} that states that no concept class of VC-dimension $d$ has a sample compression scheme of size at most $\frac{d}{5}$.

Statement 2 follows from the obvious fact that $\PBTD(\cP_m)=m$, in combination with Theorem~\ref{th:powerset}, as well as with a result by \cite{DarnstadtKSZ16} that shows $\CN(\cP_m)\le\lfloor\frac{m}{2}\rfloor$, for any $m \ge 4$.
\footnote{When $m=5k$ for some $k\ge 1$, \cite{DarnstadtKSZ16} even show that $\CN(\mathcal{P}_m)\le 2k$; hence there is a family of concept classes with $\CN<\NCTD$ for which the gap between $\CN$ and $\NCTD$ grows linearly with the size of the instance space.}
\end{proof}


Note that the compression function $f$ in a sample compression scheme for $\cC$ trivially induces a teacher mapping $T_f$ defined by $T_f(C)=f( \{ (x,C(x)) \mid x \in X \} )$. The decompression mapping $g$ then satisfies $g(T_f(C))=C$ for all $C \in \cC$. Hence $(T_f,g)$ is a successful teacher-learner pair. Proposition~\ref{prop:sc}.2 now states that there are concept classes for which the teacher-learner pairs $(T_f,g)$ induced by any optimal sample compression scheme necessarily display collusion. In other words, optimal sample compression yields collusive teaching.
An interesting problem is to find more examples of concept classes for which optimal sample compression yields collusive teachers and to determine necessary or sufficient conditions on 
the structure of such classes. Moreover, at present we do not know how large the gap between sample compression scheme size and NCTD can be.

As mentioned above, representation maps, which were proposed by \cite{KW2007} and \cite{ChalopinCMW18}, yield non-clashing teacher mappings. Clearly, in unlabelled compression, the representation map that compresses any concept in a class $\cC$ to a subset of $\cX$ must be injective, so that any two concepts in $\cC$ remain distinguishable after compression. In other words, the non-clashing teacher mappings induced by representation maps are \emph{repetition-free}, i.e., they do not map any two distinct concepts $C,C'\in\cC$ to labelled samples $T(C),T(C')$ for which 
\[
\{x\in\cX\mid (x,l)\in T(C)\mbox{ for some }l\in\{0,1\}\}\ne \{x\in\cX\mid (x,l')\in T(C')\mbox{ for some }l'\in\{0,1\}\}\enspace .
\]
Requiring no-clash teacher mappings to be repetition-free would be a limitation, as the example of the powerset over any set of $m$ instances, $m\ge 2$, shows. In this case, no-clash teaching can be done with teacher mappings of order $\lceil\frac{m}{2}\rceil$, but it is not hard to see that the best possible repetition-free no-clash teacher mapping is of order $m$.

\section{Complexity of Decision Problems Related to No-clash Teaching}


In this section, we address the complexity of the problem of deciding whether or not every concept in a given finite concept class can be taught with a non-clashing teaching set of size at most $k$, for some specified $k \ge 1$. Surprisingly perhaps, such decision problems are 
NP-hard, even when $k =1$ and teaching is done using positive examples only. 
In contrast, we show in subsection~\ref{sub:fastRTD} that the corresponding decision problems for $\PBTD$ (equivalently, for $\RTD$) 
have polynomial time solutions.

We show an equivalence between the most highly constrained such decision problem (testing if $\NCTD^+ =1$, for a given concept class) and a natural (but apparently not previously studied) constrained bipartite matching problem that is related to the well-studied notion of induced matchings.
%
The following, an immediate consequence of Proposition~\ref{prop:normalform},
allows us to restrict our complexity analysis to certain normalized concept classes.

\begin{proposition}
\label{prop:emptysetfree}
Let $\cC$ be any non-trivial concept class over a finite domain, with at least two non-empty concepts. Then,
$\NCTD^+(\cC) = \NCTD^+(\cC\setminus\{\emptyset\})$.
\end{proposition}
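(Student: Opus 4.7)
The plan is to prove each of the two inequalities separately. The easy direction, $\NCTD^+(\cC\setminus\{\eset\}) \le \NCTD^+(\cC)$, is immediate: restricting any positive non-clashing teacher mapping for $\cC$ to the subclass $\cC\setminus\{\eset\}$ preserves both positivity and the non-clashing property, and cannot increase the order.

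For the non-trivial direction $\NCTD^+(\cC) \le \NCTD^+(\cC\setminus\{\eset\})$, I would first note that if $\eset\notin\cC$ there is nothing to prove, so I may assume $\eset\in\cC$ and that $\cC\setminus\{\eset\}$ contains at least two concepts. Let $d:=\NCTD^+(\cC\setminus\{\eset\})$. The first key observation is that $d\ge 1$: if a positive teacher mapping assigned the empty set as teaching set to two distinct concepts $C_1,C_2$, then $T(C_1)\cup T(C_2)=\eset$ would vacuously be consistent with both $C_1$ and $C_2$, producing a clash; hence at most one concept of $\cC\setminus\{\eset\}$ can have an empty teaching set, ruling out $d=0$.

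Next, I would invoke Proposition~\ref{prop:normalform}(b) to select an optimal positive non-clashing teacher mapping $T$ on $\cC\setminus\{\eset\}$ satisfying $|T(C)|=\min\{|C|,d\}$ for every $C\in\cC\setminus\{\eset\}$. Because every concept in $\cC\setminus\{\eset\}$ is non-empty and $d\ge 1$, this yields $|T(C)|\ge 1$ for all such $C$; in particular, each $T(C)$ contains at least one pair $(x,1)$ with $x\in C$.

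Finally, I would extend $T$ to a mapping $T'$ on all of $\cC$ by setting $T'(\eset):=\eset$ and $T'(C):=T(C)$ otherwise. The only potential new clashes involve the pair $\{\eset,C\}$ for some non-empty $C\in\cC$: although $T'(\eset)=\eset$ is trivially consistent with $C$, the set $T'(C)$ contains some $(x,1)$ with $x\in C$, which is inconsistent with $\eset$, so no clash arises. Hence $T'$ is a positive non-clashing teacher mapping for $\cC$ with $\TC(T',\cC)=d$, completing the inequality. The main conceptual step is the use of the normal-form proposition to force non-empty teaching sets for non-empty concepts; there is no real obstacle beyond ensuring that the hypothesis of two non-empty concepts is used to guarantee $d\ge 1$.
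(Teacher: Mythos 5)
Your proposal is correct and follows essentially the same route as the paper's proof: reduce to the case $\emptyset\in\cC$, use the normal form of Proposition~\ref{prop:normalform}(b) to guarantee every non-empty concept gets a non-empty positive teaching set, and extend by assigning $\emptyset$ the empty teaching set, noting that the empty concept is inconsistent with any teaching set containing a positive example. You are slightly more explicit than the paper in justifying $d\ge 1$ via the two-non-empty-concepts hypothesis, but the argument is the same.
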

%
\begin{proof}
Let $\cC'$ denote $\cC\setminus\{\emptyset\}$. 
If $\cC' = \cC$ there is nothing to show. So, suppose that $\cC$ contains the empty concept. 
If $\NCTD^+(\cC) = k$ then trivially $\NCTD^+(\cC') \le k$. 

For the converse, suppose that $\NCTD^+(\cC')=k$, as witnessed by a normal-form mapping $T$ (cf. Proposition~\ref{prop:normalform}(b)).
Since $T$ does not assign the empty set to any concept one can obviously extend $T$ to assign the empty set to the empty concept and thus teach all of $\cC$ without clashes using no negative examples and with teaching sets of size at most $k$. (There are no clashes, because the empty concept cannot be consistent with any of the teaching sets that use at least one positive example.)
\end{proof}


Our goal in the remainder of this section is to set out hardness results for testing $\NCTD = k?$ and $\NCTD^+ = k?$, for fixed $k \ge 1$. We begin by establishing that testing $\NCTD^+ =1?$, for a given concept class $\cC$ is NP-hard. Other results follow by reduction from the $\NCTD^+ =1?$ decision problem. (It is straightforward to confirm that all of the decision problems $\NCTD \le k?$ and $\NCTD^+ \le k?$ are in NP.)

\subsection{Testing if ${\rm NCTD}^+ = 1$ is NP-hard}


Let $(\cC, \cX)$ be an instance of the ${\rm NCTD}^+ = 1$ decision problem. By 
Propositions~\ref{prop:normalform} and~\ref{prop:emptysetfree},
we can assume that $\cC$ does not contain the empty set, and that  positive teacher mappings realizing ${\rm NCTD}^+ = 1$   are restricted to those that use exactly one positive instance for each concept.

We start by observing that 
$(\cC, \cX)$
can be viewed as a bipartite graph $B_{{\cC}, {\cX}}$, with vertex classes $\cC$ (black vertices) and $\cX$ (white vertices) and an edge from $C_i \in {\cC}$ to $x_j \in {\cX}$ whenever $x_j \in C_i$. Under 
our assumptions, it follows 
that deciding if $\cal C$ has ${\NCTD}^+ = 1$  is equivalent to deciding if $B_{{\cal C}, {\cal X}}$ admits a matching $M$ such that (i) $M$ saturates all of the black vertices, and (ii) no two edges of $M$ are part of a $4$-cycle in $B_{{\cal C}, {\cal X}}$. (Condition (i) ensures that each concept in $\cal C$ has an associated positive teaching set of size $1$, and condition (ii) ensures that the resulting teacher mapping is non-clashing.)

We refer to the problem of deciding if a given bipartite graph $B$ with vertex partition $(V_b, V_w)$  admits a matching $M$ such that (i) $M$ saturates all of the vertices in $V_b$, and (ii) no two edges of $M$ are part of a $4$-cycle in $B$, as the {\em Non-Clashing Bipartite Matching Problem.}
The NP-hardness of deciding $\NCTD =1 ?$ is thus an immediate consequence of the following:

\begin{theorem}
\label{thm:NCmatching}
The Non-Clashing Bipartite Matching Problem is NP-hard.
\end{theorem}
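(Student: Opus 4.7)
The plan is to reduce from a standard NP-hard problem such as \textsc{3-SAT} (or a restricted version like \textsc{One-in-Three-SAT} or \textsc{3-Coloring}) to the Non-Clashing Bipartite Matching Problem. Given a \textsc{3-CNF} formula $\phi$ with variables $x_1,\ldots,x_n$ and clauses $c_1,\ldots,c_m$, I would construct in polynomial time a bipartite graph $B=(V_b\cup V_w,E)$ so that $B$ admits a matching saturating $V_b$ whose edges pairwise avoid $4$-cycles of $B$ if and only if $\phi$ is satisfiable.

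My first step is to build a \emph{variable gadget} for each $x_i$. The goal is a small bipartite fragment whose matchings saturating its black vertices split into exactly two ``states,'' one encoding $x_i=\mathrm{true}$ and one encoding $x_i=\mathrm{false}$. The mechanism is to introduce pairs of parallel $4$-cycles that rule out all mixed assignments: among the black vertices of the gadget, every allowed matching picks either a ``$T$-side'' of white vertices or an ``$F$-side,'' because any mixture of choices creates two matched edges that sit on a common $4$-cycle. Because a variable may occur in many clauses, the gadget will need \emph{copies} of the truth-value indicators, linked through ``propagation sub-gadgets'' that again use $4$-cycles to force the copies to all agree. This propagation sub-gadget is essentially a path of $K_{2,2}$'s where the only way to saturate all black vertices without a clash is to pick the same side throughout.

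Next I would build a \emph{clause gadget} for each $c_j=\ell_{j,1}\vee\ell_{j,2}\vee\ell_{j,3}$. The gadget contains black vertices that must be saturated, and the only way to do so without creating a $4$-cycle with some variable-gadget edge is to draw at least one matching edge from a white vertex that represents a literal currently assigned to true. Concretely, for literal $\ell_{j,k}$ I would attach a white vertex of the clause gadget to the ``wrong side'' copy of the corresponding variable, so that if the variable is set to falsify $\ell_{j,k}$ then the only edge at that white vertex becomes forbidden by a $4$-cycle, whereas if $\ell_{j,k}$ is true the edge is usable. Saturating the clause gadget's black vertices then forces at least one literal to be satisfied.

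The main obstacle is that the non-clashing condition is \emph{global}: any two matched edges sitting on a common $4$-cycle of $B$ create a clash, regardless of which gadgets they belong to. So I must design the gadgets so that, apart from the intended $4$-cycles that encode the logical constraints, no spurious $4$-cycle arises between edges of different gadgets or between a variable gadget and an unrelated clause gadget. I would enforce this by making each ``interface'' between a variable and a clause use a dedicated private copy of the variable's truth indicator, and by connecting the propagation sub-gadgets in a tree-like manner; this ensures that vertices shared between gadgets never close an unwanted $4$-cycle. Once the construction is laid out, correctness amounts to verifying (i) that any satisfying assignment of $\phi$ yields a non-clashing matching by choosing, in each variable gadget, the matching encoding the truth value, and by selecting for each clause an edge through a satisfied literal; and (ii) that any non-clashing matching saturating $V_b$ induces consistent truth values across the copies of each variable (by the propagation sub-gadget) and satisfies each clause (by the clause gadget). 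Containment in \textsf{NP} being straightforward, the reduction establishes the stated NP-hardness.
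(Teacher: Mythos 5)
Your overall strategy coincides with the paper's: a reduction from $3$-SAT with variable gadgets admitting exactly two saturating ``states,'' clause gadgets forcing a satisfied literal, and the forbidden-$4$-cycle condition as the enforcement mechanism. However, what you have written is a plan rather than a proof, and the two places where you do commit to a mechanism are the places where the real difficulty lies, and both are problematic. First, your propagation sub-gadget --- ``a path of $K_{2,2}$'s where the only way to saturate all black vertices without a clash is to pick the same side throughout'' --- cannot work as stated: in a $K_{2,2}$ the only two ways to saturate both black vertices internally are the two pairs of opposite edges of that very $4$-cycle, so \emph{every} such matching clashes. The paper sidesteps this entirely by using, for each variable, a single even ring of length $4m$ (which contains no $4$-cycles at all); its only two perfect matchings are the two truth states, so consistency across all clause occurrences is forced by the cycle structure, not by $4$-cycles.

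Second, and more importantly, you do not address how the variable--clause interface can exist without breaking soundness. To make a clause-gadget matching edge clash with a variable-gadget matching edge, you must add ``cross'' edges that close a $4$-cycle spanning the two gadgets; but those cross edges are then themselves available to the matching, so a clause vertex can be saturated by stealing a variable-gadget vertex (or vice versa), scrambling the intended semantics. The paper's central technical device is precisely a way to include such connector edges while guaranteeing they are never used: each connector edge is bridged by an auxiliary path with extra vertices so that, in any matching saturating all black (hence, by a counting argument, all white) vertices without clashes, the middle edge of the bridge is forced into the matching and the connector edge is forced out. Your remark about ``dedicated private copies'' and ``tree-like'' connections does not substitute for this; without some such forbidden-edge mechanism (or a concrete alternative), neither direction of the correctness equivalence can be verified, so the reduction is not established.
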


The proof of Theorem~\ref{thm:NCmatching} is by reduction from the familiar NP-hard problem $3$-SAT. The details are given in Appendix~\ref{app:proof:NCmatching}.

{\remark
{\rm 
The reduction produces a bipartite graph whose vertices have degree bounded by five. 
One can conclude then that testing $\NCTD^+ =1$ is NP-hard even if concepts contain at most five instances, and instances are contained in at most five concepts. 
It is natural to ask to what extent this can be tightened. 
In Appendix~\ref{app:tighterbounds}.1, we describe a modification of the reduction that produces a bipartite graph whose vertices have degree bounded by three, from which it follows that testing $\NCTD^+ =1$ is NP-hard even if concepts contain at most three instances, and instances are contained in at most three concepts. 
On the other hand, if either (i) all concepts have at most two instances, or (ii) all instances are contained in at most two concepts,
the bipartite graph $B_{{\cC}, {\cX}}$ has the property that the degree of all vertices in one of its two parts bounded by at most two. In this case, it follows immediately from the algorithm in Appendix~\ref{app:tighterbounds}.2 that testing $\NCTD^+ =1$ can be done in polynomial time.

}}


\subsection{Testing if ${\NCTD} = 1$ is NP-hard}\label{sub:NCTD=1}

We reduce the ${\NCTD}^+ = 1$ decision problem to the ${\NCTD} = 1$ decision problem. 
Let $(\cal C, \cal X)$ be an instance of the ${\rm NCTD}^+ = 1$ decision problem. 
As before, we will assume (following 
Proposition~\ref{prop:emptysetfree})
that $\cC$ does not contain the empty set.
%
%
We make two disjoint copies $({\cal C}^1, {\cal X}^1)$ and
$({\cal C}^2, {\cal X}^2)$ of $(\cal C, \cal X)$, and take their union, denoted $2\cC$ to be an instance of the ${\rm NCTD} = 1$ decision problem. 
We argue that $\NCTD(2\cC) = 1$ if and only if $\NCTD^+(\cC) = 1$.

It is clear that $\NCTD^+(\cC) = 1$ implies $\NCTD(2\cC) = 1$.
For the converse,
suppose that teaching mapping $T$ provides a ${\rm NCTD} = 1$ solution of the concept class $2\cC$ that, among all such mappings, uses the fewest negative examples. 

Note that, all concepts in one component concept class are consistent with (necessarily negative) examples drawn from the opposite domain, and inconsistent with positive examples drawn from the opposite domain. Thus, the minimality of $T$ ensures that negative examples used for any component concept class $\cC^i$ are drawn from its associated domain $\cX^i$ (otherwise any such negative example could be replaced by a positive example for the corresponding concept, without creating a clash).

But, for the same reason, it cannot be that for both concept classes $\cC^i$ there exist one or more concepts whose teaching set uses a negative example drawn from the associated domain $\cX^i$, since any pair of concepts from different classes taught in this way would necessarily clash.
It follows that $T$ must use only positive examples (necessarily from the associated domain $\cX^i$) for teaching  concepts in at least one of the two component concept classes $\cC^i$; in this sense it must provide a ${\rm NCTD}^+ = 1$ solution of the instance $(\cal C, \cal X)$.


\subsection{Testing if ${\NCTD}^+ = k$ is NP-hard, for $k > 1$.}
Again we describe a reduction from the ${\NCTD}^+ = 1$ decision problem.
Given an instance of the ${\rm NCTD}^+ = 1$ decision problem, specifically a pair $(\cal C, \cal X)$, where $\cal C$ is a concept class  over the finite domain $\cal X$ disjoint from $\{x_1, \ldots, x_{k-1} \}$, we construct the concept class 
$\cP_{k-1} \sqcup \cC$.
By Lemma~\ref{lem:powercombination+}, we 
know that $\NCTD^+(\cP_{k-1} \sqcup \cC) = k-1 + \NCTD^+(\cC)$, so 
$\NCTD^+(\cC) = 1$ if and only if $\NCTD^+(\cP_{k-1} \sqcup \cC) = k$.

\subsection{Testing if ${\NCTD} = k$ is NP-hard, for $k > 1$.}
Again we describe a reduction from the ${\NCTD}^+ = 1$ decision problem. Let $\cC$ be a concept class over the finite domain $\cal X$, disjoint from $\{x_1, \ldots, x_{2(k-1)} \}$. We construct the composite concept class $4\cC := 2(2\cC)$ as in subsection~\ref{sub:NCTD=1}. By the reduction of that subsection, it will suffice to argue that $\NCTD(2\cC) = 1$ if and only if $\NCTD({\cal P}_{2(k-1)} \sqcup 4\cC) = k$.

First note that, by Theorem~\ref{th:powerset} and the sub-additivity of $\NCTD$ (equation (\ref{eq:nctd-subadditive})), $\NCTD({2\cC}) = 1$ implies  that 
$\NCTD({\cal P}_{2(k-1)} \sqcup 4{\cal C}) \le k$. 
In addition $\NCTD({2\cC}) = 1$ implies that $\ANCTD(4\cC) > 0$ which,
together with $\ANCTD({\cal P}_{2(k-1)}) = k-1$ (Remark~\ref{rem:powerset}) implies 
$\ANCTD({\cal P}_{2(k-1)}) + \ANCTD( 4\cC) > k-1$.
This in turn implies $\ANCTD({\cal P}_{2(k-1)} \sqcup 4\cC) >k-1$, by Lemma~\ref{lem:anctd-additive}, from which we immediately conclude that
$\NCTD({\cal P}_{2(k-1)} \sqcup 4{\cal C}) > k-1$, and hence
$\NCTD({\cal P}_{2(k-1)} \sqcup 4{\cal C}) = k$.

For the converse, we have seen that $\NCTD({\cal P}_{2(k-1)} \sqcup 4{\cal C}) = k$ implies $\ANCTD({\cal P}_{2(k-1)} \sqcup 4{\cal C}) \le k$ (trivially). This in turn implies
$\ANCTD({\cal P}_{2(k-1)})  +  \ANCTD(4{\cal C}) \le k$, by Lemma~\ref{lem:anctd-additive},
and hence
$\ANCTD(4{\cal C}) \le 1$, by Remark~\ref{rem:powerset}.
But $\ANCTD(4{\cal C}) \le 1$ implies $\NCTD(2{\cal C}) =1$, since
(i) $4\cC$ can be viewed as two copies of $2\cC$, and (ii) any teacher mapping for $4\cC$ realizing $\ANCTD(4{\cal C}) \le 1$ uses the empty set as a teaching set at most once, and hence uses a teaching set of size greater than 1 in at most one of the two copies of $2\cC$.

\subsection{Deciding $\PBTD \le k?$ (equivalently $\RTD \le k?$) has a polynomial-time solution}
\label{sub:fastRTD}

As before, we can cast the decision problem $\PBTD \le k?$ as a constrained matching problem in a suitably defined bipartite graph.

\begin{definition}
	Let $\cC$ be a concept class over domain $\cX$. 
	Let $1 \le k \le |\cX|$ be an integer. 
	Let $\cS_k = (\cX\times\{0,1\})^k$ be the family of labeled 
	samples of size $k$. Then $G_k=G_k(\cC)$ denotes the bipartite
	graph with vertex classes $\cC$ and $\cS_k$ and an edge between 
	$C\in\cC$ and $S\in\cS_k$ iff $C$ is consistent with $S$. 
\end{definition}

The following result is a slight extension of a well known result
in the theory of constrained matchings:

\begin{theorem} \label{th:ghl2001}
	Let $\cC$ be a concept class of size $m$ over domain $\cX$
	and let $G=G_k(\cC)$. Then the following statements are equivalent:
	\begin{enumerate}
		
		\item  
			There exists a matching $M$ of size $m$ in $G$ 
			such that $G$ contains no alternating cycles with 
			respect to $M$ (called a uniquely restricted matching of size $m$~\citep{GHL2001}).
		\item
			There exists a matching $M$ of size $m$ in $G$
			such that every $M$-induced subgraph contains a vertex
			of degree $1$. 
		\item
			There exist $m$ distinct vertices (samples) 
			$S_1,\ldots,S_m \in \cS_k$ and an ordering
			$C_1,\ldots,C_m$ of the vertices (=concepts) in $\cC$
			such that the following holds: 
			\begin{itemize}
				\item
					There is an edge between $C_i$ and $S_i$
					(i.e., $C_i$ is consistent with $S_i$).
				\item
					If there is an edge between $C_i$ and
					$S_j$ (i.e., if $C_i$ is consistent 
					with $S_j$), then $i \le j$.
			\end{itemize}
		\item
			$\PBTD(\cC) \le k$.
	\end{enumerate}
\end{theorem}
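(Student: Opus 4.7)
The plan is to prove the four-way equivalence via the pairwise implications $1\!\Leftrightarrow\!2$, $2\!\Leftrightarrow\!3$, and $3\!\Leftrightarrow\!4$. The first is (essentially) the Golumbic--Hirst--Lewenstein characterization of uniquely restricted matchings, which one can invoke directly from~\citep{GHL2001}; the ``slight extension'' of the known result is the inclusion of the $\PBTD$ condition~4 in the equivalence class, obtained by routing through condition~3.

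For $3\Rightarrow 2$, write $M=\{(C_i,S_i):1\le i\le m\}$, fix any non-empty $M'\seq M$ with index set $I$, and let $j^\star=\max I$. The sample-neighbors of $C_{j^\star}$ inside the induced subgraph $G[V(M')]$ are those $S_j$ with $j\in I$ and $C_{j^\star}\sim S_j$; condition~3 forces $j\ge j^\star$, while $j\in I$ forces $j\le j^\star$, so the only such neighbor is $S_{j^\star}$, and $C_{j^\star}$ has degree~$1$. For $2\Rightarrow 3$, construct a digraph $D$ on the edges of $M$ by placing an arc from $(C_a,S_a)$ to $(C_b,S_b)$ whenever $a\ne b$ and $C_a\sim S_b$ in $G$. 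A directed cycle $e_{i_1}\to\cdots\to e_{i_r}\to e_{i_1}$ in $D$ unfolds into an $M$-alternating cycle $C_{i_1},S_{i_2},C_{i_2},S_{i_3},\ldots,C_{i_r},S_{i_1},C_{i_1}$ in $G$; restricting to $M'=\{e_{i_1},\ldots,e_{i_r}\}$, every vertex of $G[V(M')]$ has degree $\ge 2$ (each cycle-vertex sees its matched neighbor plus at least one non-matched cycle-neighbor), contradicting condition~2. Hence $D$ is acyclic, and any topological ordering of $D$ relabels the matched pairs as $(C_1,S_1),\ldots,(C_m,S_m)$ meeting the requirements of condition~3.

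For $3\Rightarrow 4$, declare $C_i\succ C_j\Leftrightarrow i>j$: condition~3 says every concept consistent with $S_i$ lies in $\{C_1,\ldots,C_i\}$, and $C_i$ is the unique $\succ$-maximum of this set, so $S_i$ is a valid preference-based teaching set for $C_i$ of size $\le k$, giving $\PBTD(\cC)\le k$. For $4\Rightarrow 3$, let $\succ$ and a teacher mapping $T$ witness $\PBTD(\cC)\le k$, take any linear extension of $\succ$ and relabel the concepts as $C_1,\ldots,C_m$ so that $C_i\succ C_j\Rightarrow i>j$; for each $i$, let $S_i\in\cS_k$ be any $k$-tuple consistent with $C_i$ whose underlying set of labelled examples contains $T(C_i)$ (padding with examples consistent with $C_i$). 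These $S_i$ are automatically distinct: if $S_i=S_j$ with $i\ne j$, then $T(C_j)\seq S_j=S_i$ is consistent with $C_i$ and $T(C_i)\seq S_i=S_j$ is consistent with $C_j$, forcing both $C_j\succ C_i$ and $C_i\succ C_j$ by the $\PBTD$ property, contradicting antisymmetry. Finally, $C_a\sim S_b$ implies $C_a$ is consistent with $T(C_b)$, whence $C_a=C_b$ or $C_b\succ C_a$, and therefore $a\le b$. The main technical content of the proof lies in $2\Rightarrow 3$: once one spots the right digraph $D$, cycle-freeness follows from the alternating-cycle characterization, and the remaining implications reduce to straightforward compatibility checks between the matching-edge ordering and the preference order.
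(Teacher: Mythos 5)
Your proof is correct and follows essentially the same route as the paper: conditions 1--3 are handled via the theory of uniquely restricted matchings of \cite{GHL2001} (which the paper simply cites, while you reprove $2\Leftrightarrow 3$ directly), and $3\Leftrightarrow 4$ is the same translation between the index ordering of the matched samples and a preference relation witnessing $\PBTD(\cC)\le k$. Your added details --- the max-index degree-one argument for $3\Rightarrow 2$, the acyclic digraph and topological order for $2\Rightarrow 3$, and the distinctness of the $S_i$ in $4\Rightarrow 3$ --- are all sound and merely make explicit what the paper leaves to the reader or to the cited reference.
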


\begin{proof}
	1.$\Leftrightarrow$2. and 2.$\Leftrightarrow$3.  are well known 
	equivalences in the theory of constrained matchings. 
	See Theorem~$3.1$ in~\citep{GHL2001}. \\
	3.$\Rightarrow$4.:
	Under the specified interpretation, 
	the ordering over the concepts corresponds to a preference relation where $C_j$ is preferred over $C_i$ whenever $j>i$. This preference relation is a witness of $\PBTD(\cC)\le k$.\\
	4.$\Rightarrow$3.:
	If a preference relation (partial order) witnessing $\PBTD(\cC)\le k$ exists, then any linear extension of it will satisfy 3.
\end{proof}


{\remark \label{rem:c1991}
{\rm      The second statement in Theorem~\ref{th:ghl2001} can be strengthened 
	as follows: there exists a matching of size $m$ in $G$ such that 
	every $M$-induced subgraph contains a vertex of degree $1$ in each 
	of the two vertex classes, cf.~\citep{C1991}.
	
}}

The above characterization of classes with a recursive teaching dimension
of at most $k$ easily leads to a linear time algorithm for the corresponding 
decision problem:\footnote{See~\citep{C1991} for a similar algorithm
(based on a similar characterization) that decides in linear time whether 
a bipartite graph has a unique maximum matching.}

\begin{corollary}
	\begin{enumerate}
		\item
			Let $G$ be a bipartite graph with vertex classes
			$V_0$ and $V_1$ such that $|V_0| \le |V_1|$. 
			Let $|G|$ denote the size (= number of vertices 
			plus number of edges) of $G$. There is an algorithm 
			that runs in time $O(|G|)$ and returns a uniquely 
			restricted matching of size $|V_0|$ (provided it 
			exists).\footnote{The more general problem of
			deciding whether there exists a uniquely restricted 
			matching of size $k$, with $k$ being part of the input,
			is known to be NP-complete~\citep{GHL2001}.}
		\item 
			Suppose that the graph $G_k(\cC)$ associated with 
			a concept class $\cC$ is given. Then there is an 
			algorithm for checking whether 
			$\PBTD(\cC) \le k$
			whose run time is linear in the size of $G_k(\cC)$.
			Moreover, if $\PBTD(\cC) \le k$, it returns a
			preference relation that witnesses this fact.
	\end{enumerate}
\end{corollary}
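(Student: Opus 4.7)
The plan is to derive both statements from a single linear-time greedy scheme that repeatedly extracts ``forced'' edges. For part 1, maintain a FIFO queue $Q$ of vertices currently having degree one in the shrinking graph (initially all degree-one vertices of $G$). While $Q$ is non-empty, pop a vertex $u$ whose degree is still one in the current graph, commit the edge $\{u,v\}$ to the matching $M$ (where $v$ is $u$'s unique neighbour), delete $u$ and $v$, and decrement the stored degree of each former neighbour of $v$, pushing any such neighbour onto $Q$ as soon as its degree drops to one. At termination, return $M$ if it saturates $V_0$ and otherwise report that no uniquely restricted matching of that size exists. All updates can be carried out in time proportional to the number of edges incident to the deleted vertices, giving an overall running time of $O(|V|+|E|)=O(|G|)$.

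Correctness in the forward direction rests on the invariant that whenever $G$ contains a uniquely restricted matching $M^\star$ of size $|V_0|$, there is still such a matching in the current graph that extends the already committed edges. The inductive case splits according to which side of $G$ the chosen degree-one vertex lies in. A degree-one $v\in V_0$ has only one possible partner, so that edge is literally forced. For a degree-one $w\in V_1$ with unique neighbour $v$, if $M^\star$ instead matches $v$ to some $w'\neq w$, replace $\{v,w'\}$ in $M^\star$ by $\{v,w\}$; the result is still a matching of size $|V_0|$, and it remains uniquely restricted because any alternating cycle in its induced subgraph would either pass through $w$ (impossible, as $w$ has degree one in the current graph) or avoid both $w$ and $w'$, in which case it would also be an alternating cycle of $M^\star$, contradicting the UR property of $M^\star$.

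For the converse direction, suppose the algorithm stalls with $V_0$ not fully saturated, yet some UR matching $M^\star$ of size $|V_0|$ extending the committed edges does exist in the current graph. By Remark~\ref{rem:c1991}, the $M^\star$-induced subgraph contains a vertex $u'\in V_1$ of degree one. Every neighbour of $u'$ lies in $V_0$ and is $M^\star$-saturated, so every such neighbour already belongs to the $M^\star$-induced subgraph; hence $u'$ has degree one in the current graph as well, contradicting the assumption that $Q$ was empty. This establishes the first statement.

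Part 2 is obtained by running the algorithm on $G=G_k(\cC)$ with $V_0=\cC$ and $V_1=\cS_k$. By Theorem~\ref{th:ghl2001}, $\PBTD(\cC)\le k$ holds exactly when $G$ admits a uniquely restricted matching of size $|\cC|$, so success or failure of the algorithm decides the question. Moreover, if $C_{(1)},C_{(2)},\ldots,C_{(m)}$ is the order in which the concepts are committed, with associated samples $S_{(1)},\ldots,S_{(m)}$, then declaring $C_{(i)}\succ C_{(j)}$ whenever $i>j$ realizes the ordering of Theorem~\ref{th:ghl2001}(3): at the moment $C_{(i)}$ is committed, its sample $S_{(i)}$ has degree one in the surviving graph, which means that among the yet-uncommitted concepts $S_{(i)}$ is consistent only with $C_{(i)}$, so $C_{(i)}$ is the unique $\succ$-most-preferred concept consistent with $S_{(i)}$. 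I expect the most delicate step to be the swap argument for degree-one $V_1$ vertices, since there the committed edge is not literally forced and one must verify that the swap cannot introduce any new alternating cycle.
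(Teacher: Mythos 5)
Your greedy ``peel off forced edges'' scheme is essentially the paper's algorithm, but with one consequential difference: the paper only ever pops degree-one vertices from $V_1$ (justified by Remark~\ref{rem:c1991}, which guarantees such a vertex exists in $V_1$ whenever a uniquely restricted matching of the required size survives), whereas you pop from both sides. This symmetrization breaks your proof of part~2. Your preference relation rests on the claim that ``at the moment $C_{(i)}$ is committed, its sample $S_{(i)}$ has degree one in the surviving graph''---but when the popped degree-one vertex is the \emph{concept} $C_{(i)}$ rather than the sample, $S_{(i)}$ may still be consistent with many uncommitted concepts, and the commit order then violates condition~3 of Theorem~\ref{th:ghl2001}. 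Concretely: if $C_1$ is adjacent only to $S$ while $C_2$ is adjacent to $S$ and $S'$, your algorithm may pop $C_1$ first and commit $(C_1,S)$, then $(C_2,S')$, yielding $C_2\succ C_1$; but $C_2$ is consistent with $S$, so $S$ no longer teaches $C_1$ under that preference. The correct witness here is $C_1\succ C_2$. To repair this you must either restrict popping to the $\cS_k$ side (as the paper does, at no loss of generality), or extract the ordering from the final matching by a separate pass rather than from the commit order.

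There is also a gap in the swap argument for part~1. You dismiss alternating cycles through $w$ ``as $w$ has degree one in the current graph,'' but the invariant requires $M^{\star\star}$ to be uniquely restricted in the \emph{original} graph $G$, where $w$ may have additional neighbours that were already deleted (they lie in the support of the committed edges, hence in the $M^{\star\star}$-induced subgraph). A cycle of the form $v\,$--$\,w\,$--$\,u\,$--$\,\cdots$ with $u$ a deleted vertex is not excluded by your stated reason. It \emph{can} be excluded, but only via an extra observation you never make: every committed edge has a ``popped'' endpoint all of whose other neighbours lie in earlier-committed edges, so the minimum-index committed edge on any alleged alternating cycle yields a contradiction; hence no alternating cycle meets a committed edge, and only then does the cycle live in the current graph where your degree-one argument applies. (The same observation is what you would need to rule out false positives, i.e.\ to show the returned matching is uniquely restricted even without presupposing that one exists---which matters for using the algorithm as a decision procedure in part~2.) The paper sidesteps all of this by maintaining the invariant that the removed vertices induce a subgraph with a \emph{unique perfect matching}, which its one-sided peeling order preserves directly.
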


The second part of the corollary is immediate from the first part
and Theorem~\ref{th:ghl2001}. The first part of the corollary is
based on the simple idea of initializing the matching $M$ with the empty
set and then iteratively doing the following:
\begin{enumerate}
	\item   
		If $V_0=\eset$, then return $M$ and stop.
	\item
		If $V_1$ does not contain any vertex of degree $1$,
		then return an error message (indicating that there
		exists no uniquely restricted matching of size $|V_0|$
		in $G$) and stop.\footnote{This can be justified by
		Remark~\ref{rem:c1991} and the invariance conditions 
		below.} Otherwise, pick a vertex of degree $1$
		from $V_1$, say vertex $v$ with $u$ as its unique neighbor
		in $V_0$.
	\item
		Insert the edge $(u,v)$ into $M$, remove $u$ from $V_0$
		and $v$ from $V_1$ and update the degrees of the vertices
		which are adjacent to either $u$ or $v$.
\end{enumerate}
Note that $V_0$ and $V_1$ are dynamically changed in the course 
of the algorithm. The following conditions are easily shown to be
satisfied after a run through the main loop provided that they are
satisfied immediately before entering the loop:
\begin{enumerate}
	\item
		There exists a uniquely restricted matching of size $|V_0|$
		for the subgraph induced by $V_0 \cup V_1$.
	\item
		Let $V'_0$ and $V'_1$ denote the vertices that have been
		removed from $V_0$ and $V_1$, respectively. Then there is 
		a unique perfect matching for the subgraph induced 
		by $V'_0 \cup V'_1$.
\end{enumerate}
The correctness of the algorithm directly follows from these
invariance conditions.

We briefly note that the results described in this section also hold,
mutatis mutandis, for $\PBTD^+$ in place of $\PBTD$.\footnote{The
corresponding bipartite graph has as its second vertex class the family
of positive samples of order at most $k$.}

\section{Conclusions}

No-clash teaching represents the limit of data efficiency that can be achieved in teaching settings obeying Goldman and Mathias's notion of collusion-freeness. Therefore, it is the sole most promising collusion-free teaching model to shed light on two open problems in computational learning theory, namely (i) to find a teaching complexity parameter that is upper-bounded by a function linear in $\VCD$, and (ii) to establish an upper bound on the size of smallest sample compression schemes that is linear in $\VCD$. 
If \emph{any}\/ collusion-free teaching model yields a complexity upper-bounded by (a function linear in) $\VCD$, then no-clash teaching does. Likewise, if \emph{any}\/ collusion-free model is powerful enough to compress concepts as efficiently as sample compression schemes do, then no-clash teaching is.

The most fundamental open question resulting from our paper is probably whether $\NCTD$ is upper-bounded by $\VCD$ in general.

Furthermore, our results introduce some intriguing connections between $\NCTD$ and the well-studied field of constrained matching in bipartite graphs 
that may open up a line of study that relates teaching complexity, as well as sample compression and $\VCD$, to fundamental issues in matching theory.

\bibliography{bib}

\appendix

\section{Proof of Theorem~\ref{thm:NCmatching}}\label{app:proof:NCmatching}

\begin{proof}
We describe a parsimonious reduction from the familiar NP-hard problem $3$-SAT, an instance of which is a set ${\cal D} =\{ D^1, \ldots, D^m \}$  of clauses, each of which is a disjunction of three literals drawn from an underlying set ${\cal V} = \{ V^1, \ldots, V^n \}$ of variables. Specifically, given an instance $\cal D$ of $3$-SAT, we construct a bipartite graph $B_{\cal D}$ (vertices are either black or white, and all edges join a black vertex to a white vertex) that admits a matching $M$ such that (i) $M$ saturates all of the black vertices, and (ii) no two edges of $M$ are part of a $4$-cycle in $B$, if and only if the instance $\cal D$ is satisfiable.

To this end, we first associate with each variable $V^i$ a \emph{variable gadget}: a ring of $4m$ vertices, with alternating subscripted labels $v^i$ and $w^i$, emphasizing its bipartite nature (cf. Figure~\ref{fig:VariableGadget}(a)). A matching that saturates all of the  $v^i$-vertices (black) of this gadget  is of one of two types, illustrated in Figure~\ref{fig:VariableGadget}(b) and (c)), which we associate with the two possible truth assignments to $V^i$. 


\begin{figure}[htbp]
\centerline{\includegraphics[scale=0.7]{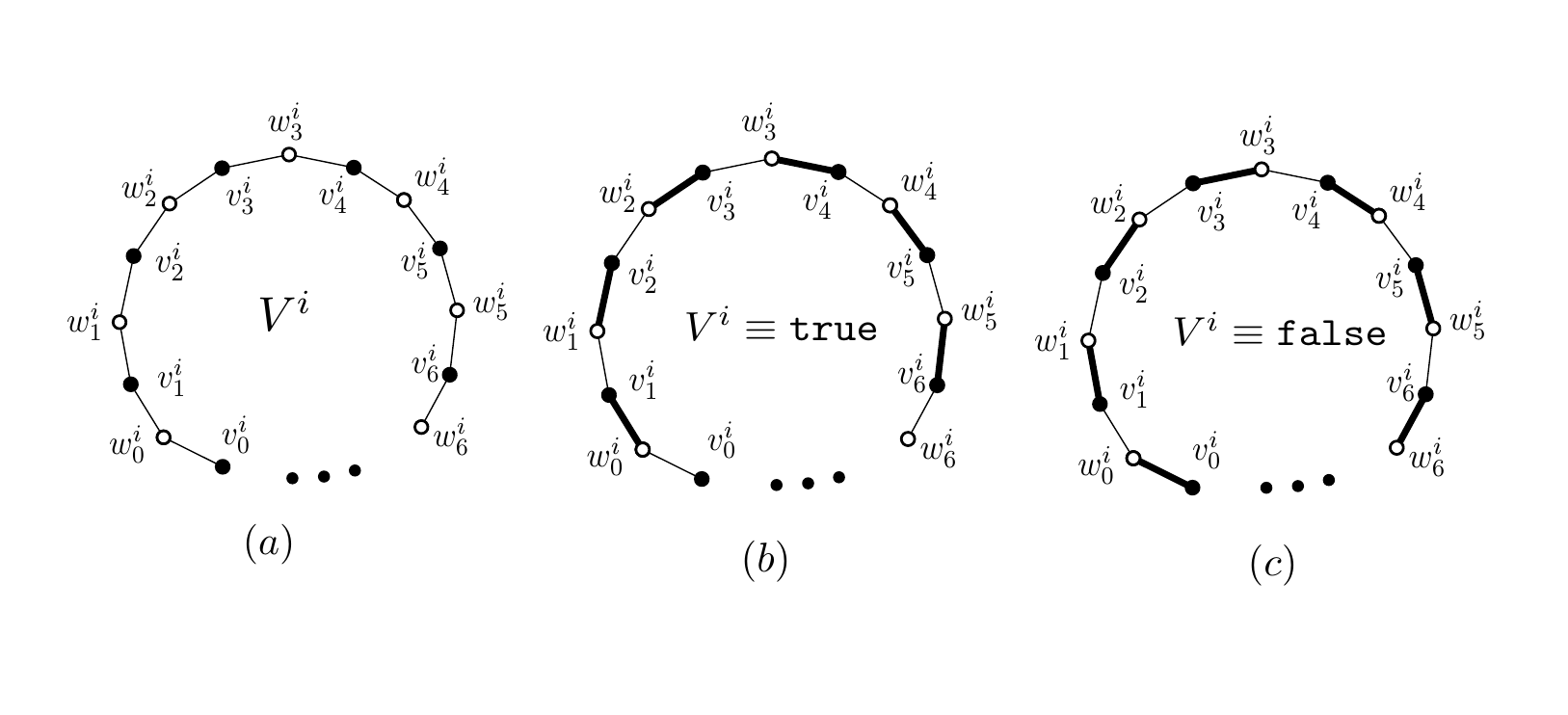}}
\vspace*{-1.2cm}
\caption{VariableGadget}
\label{fig:VariableGadget}
\end{figure}

We associate with each clause $D^j$ a \emph{clause gadget} consisting of $10$ vertices, with subscripted labels $p^j$, $q^j$, $r^j$ and $s^j$ (cf. Figure~\ref{fig:ClauseGadget}(a)). It is straightforward to confirm that any matching that saturates all of the $r^j$ and $q^j$-vertices (black) must use exactly one of the three $p^j q^j$-edges, illustrated in Figure~\ref{fig:ClauseGadget}(b) (c) and (d)). We refer to the $p^j q^j$-edges as \emph{portals} of the clause gadget, since their endpoints are the only points of connection with other parts of the full construction.

\begin{figure}[htbp]
\centerline{\includegraphics[scale=0.7]{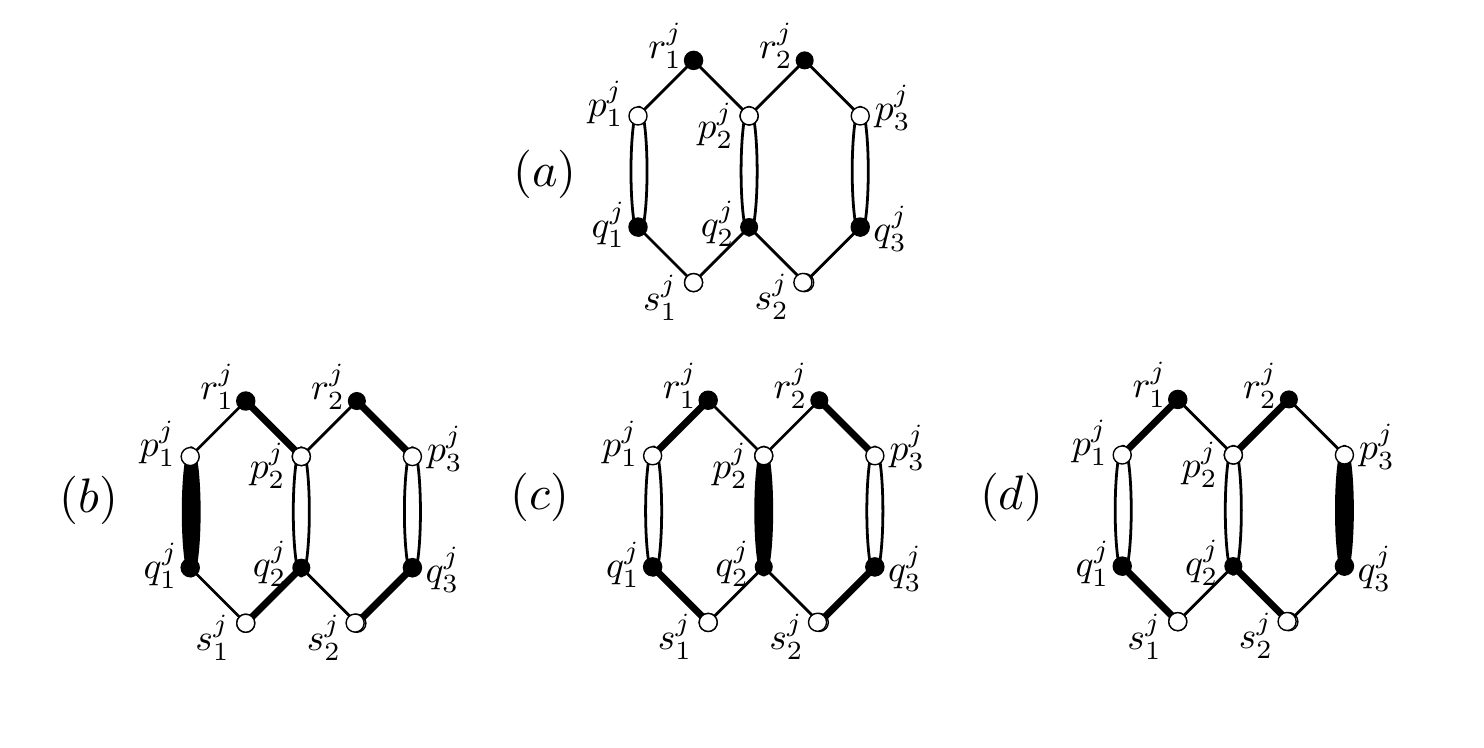}}
\caption{ClauseGadget}
\label{fig:ClauseGadget}
\end{figure}

We complete the construction by adding edges from vertex gadgets to appropriate clause gadget portals. Specifically, (i) if the $k$-th literal in clause $D^j$ is $V^i$, then we add edges from $v^i_{2j}$ to $p^j_k$ and $q^j_k$ to $w^i_{2j}$ (cf. Figure~\ref{fig:ConnectorGadgets}(a)) and (ii)  if the $k$-th literal in clause $D^j$ is $\overline{V^i}$, then we add edges from $v^i_{2j}$ to $p^j_k$ and $q^j_k$ to $w^i_{2j-1}$ (cf. Figure~\ref{fig:ConnectorGadgets}(b)). These \emph{connector} edges, shown dashed in Figures~\ref{fig:ConnectorGadgets}(a) and (b), are forbidden in any matching satisfying the constraints set out above, by the inclusion, for each such edge, of a pair of additional vertices and associated bridging path, as illustrated in Figure~\ref{fig:ConnectorGadgets}(c).
(Observe that since the graph has the same number of black and white vertices, a matching that saturates all of the black vertices must also saturate all of the white vertices. Thus, for each connector edge, the middle edge of its bridging path is forced to belong to the matching; otherwise, the end edges of the bridging path must both be chosen, resulting in a clash.)

It follows that  if the $k$-th literal in clause $D^j$ is $V^i$, and the edge $p^j_k q^j_k$ belongs to the constrained matching then edge $v^i_{2j} w^i_{2j}$ cannot belong. Similarly,  if the $k$-th literal in clause $D^j$ is $\overline{V^i}$, and the edge $p^j_k q^j_k$ belongs to the constrained matching then edge $v^i_{2j} w^i_{2j-1}$ cannot belong.

\begin{figure}[htbp]
\centerline{\includegraphics[scale=0.8]{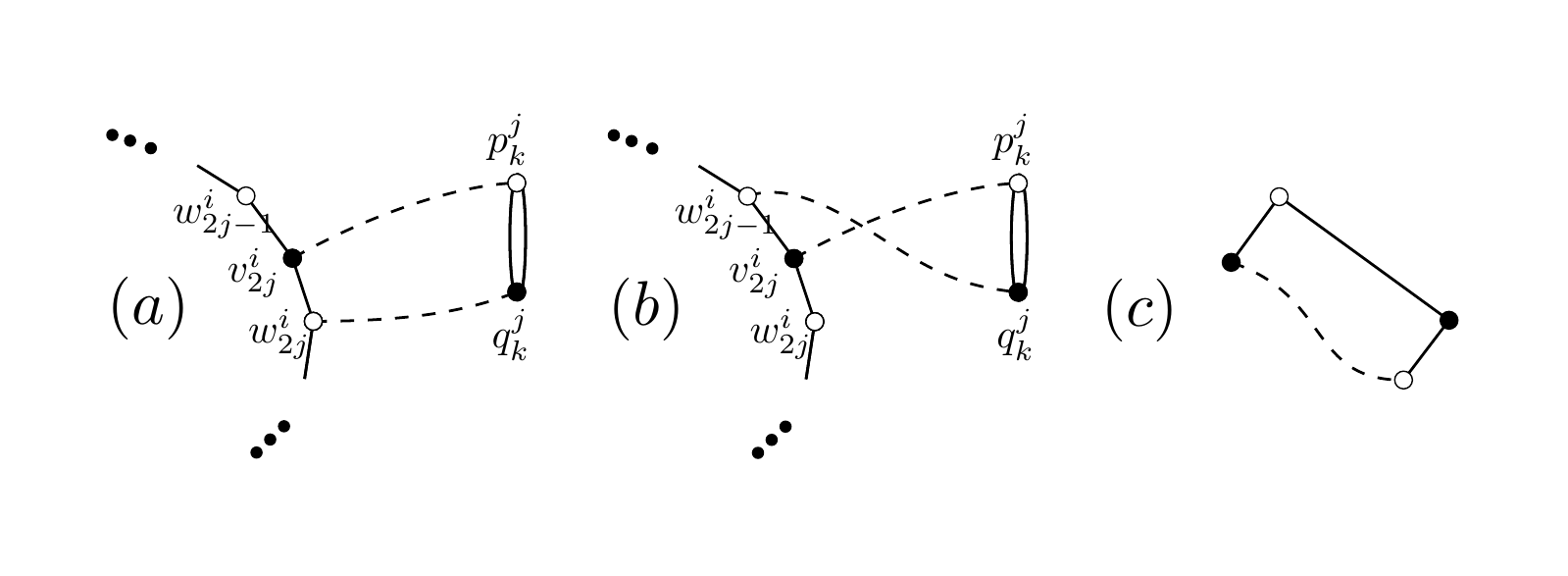}}
\vspace*{-1.2cm}
\caption{Connector Gadgets}
\label{fig:ConnectorGadgets}
\end{figure}


To complete the proof it remains to argue that the resulting graph $B_{\cal D}$ admits a matching $M$ such that (i) $M$ saturates all of the black vertices, and (ii) no two edges of $M$ are part of a $4$-cycle in $B_{\cal D}$, if and only if the instance $\cal D$ is satisfiable. Suppose first that $B_{\cal D}$ admits such a matching $M$. Since none of the connector edges are included in $M$, it follows (as argued above) that in every vertex gadget the black vertices are saturated in one of the two ways illustrated in Figure~\ref{fig:VariableGadget}(b) and \ref{fig:VariableGadget}(c)). Similarly, in every clause gadget, the black vertices are saturated in one of the three ways illustrated in Figure~\ref{fig:ClauseGadget}(b), \ref{fig:ClauseGadget}(c) and \ref{fig:ClauseGadget}(d)). Suppose that the portal edge $p^j_k q^j_k$ of the gadget associated with clause $D^j$ belongs to the matching $M$. Then, by our choice of connector edges, if the $k$-th literal in clause $D_j$ is $V^i$, it must be that edge $v^i_{2j} w^i_{2j}$ does not belong to $M$, that is the matching on the variable gadget associated with $V^i$ has the associated truth assignment $\tt true$. Similarly, if the $k$-th literal in clause $D_j$ is $\overline{V^i}$, it must be that edge $v^i_{2j} w^i_{2j-1}$ does not belong to $M$, that is the matching on the variable gadget associated with $V^i$ has the associated truth assignment $\tt false$. It follows that the truth assignment to the variables in $\cal V$, associated with the matchings induced on the vertex gadgets, satisfies all of the clauses in $\cal D$.

On the other hand, suppose that $\cal D$ is satisfiable, that is there is an assignment of truth values to the variables in $\cal V$ that satisfies all of the clauses in $\cal D$. Then, if we (i) choose the matching on the vertex gadget associated with $V^i$ to be the one corresponding to its truth assignment, and (ii) choose any matching on the clause gadget associated with clause $D^j$ including a portal edge associated with one of the satisfied literals in $D^j$, and (iii) choose all of the edges added to prevent the choice of connector edges, it is straightforward to confirm that the chosen edges form a matching $M$ in $B_{\cal D}$ such that (i) $M$ saturates all of the black vertices, and (ii) no two edges of $M$ are part of a $4$-cycle in $B_{\cal D}$.
\end{proof}

\section{Complexity of Degree-bounded Instances of Non-clashing Bipartite Matching}\label{app:tighterbounds}

The reduction described in the proof of Theorem~\ref{thm:NCmatching} produces a bipartite graph whose vertices have degree at most five. 
(Degree five is attained for the vertices $p^j_2$ and $q^j_2$ of the clause gadgets, both of which have three incident edges within the gadget and two from a bridged connector.) 
It is natural to ask if the hardness result continues to hold for bipartite graphs all of whose vertices have degree strictly less than five. 
In the next subsection we describe 
a fairly simple modification of both our clause and connector structures that allows us to reduce the maximum degree to three. Following that, we show that 
if the maximum degree among vertices in either part of a given bipartite graph is reduced to two there is a polynomial time algorithm to decide if the graph admits a non-clashing matching.

\subsection{A modified reduction with maximum degree three}

We begin by describing a new clause gadget, illustrated in Figure~\ref{fig:NewClauseGadget}(a), with the same $p$-$q$ portal structure as before but with the additional property that all $p$ and $q$ vertices have degree two. It is straightforward to confirm that, up to symmetry, the matching illustrated in Figure~\ref{fig:NewClauseGadget}(b) is the only matching that saturates all of the vertices using only edges internal to the gadget.

\begin{figure}[htbp]
\centerline{\includegraphics[scale=0.7]{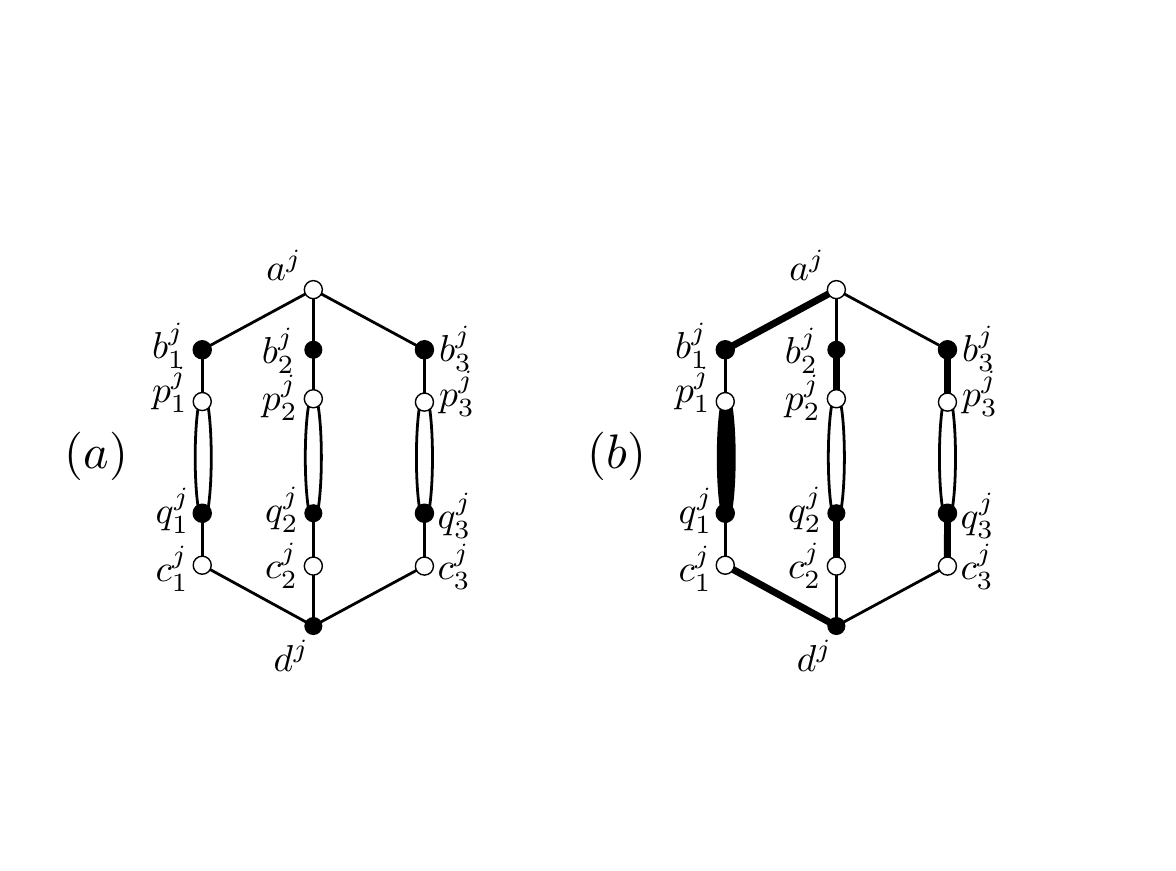}}
\caption{NewClauseGadget}
\label{fig:NewClauseGadget}
\end{figure}

Next we describe a somewhat more complicated connector structure that is used to link vertices in the variable gadgets with portal vertices of the new clause gadget. Schematically, as illustrated in Figures~\ref{fig:NewConnectorGadgets}(a) and (b), the connector structure plays exactly the same role as its counterpart (pair of bridged edges) in the earlier construction. The new connector structure, illustrated in Figures~\ref{fig:NewConnectorGadgets}(c), also contains edges, dashed as before, that cannot be part of any perfect non-clashing matching. Their role, as before, is simply to constrain the choice of other edges (in any perfect non-clashing matching).

\begin{figure}
\centerline{\includegraphics[scale=0.5]{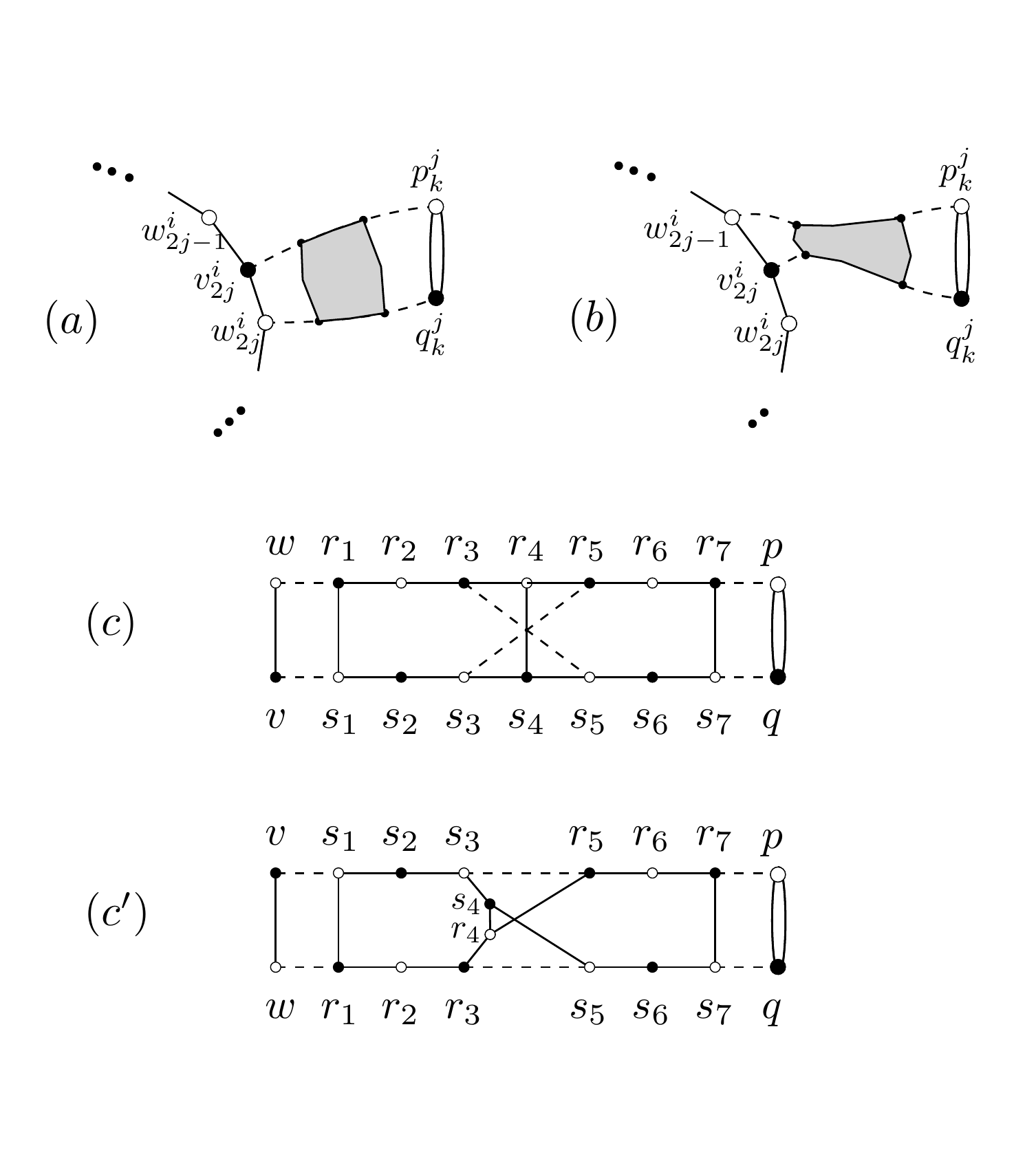}}
\caption{NewConnectorGadgets}
\label{fig:NewConnectorGadgets}
\end{figure}

It is easiest to argue first that neither of the dashed diagonals can be used. If both are used then edge $r_4s_4$ must also be used, creating a clash. On the other hand if just one, say $r_3s_5$ is used, then either $r_4s_4$ must also be used or both $r_4r_5$ and $s_3s_4$ must be used, creating a clash in either case. 

By parity, an even number of the horizontal dashed edges are used in any perfect matching. Since it is impossible to choose both $w r_1$ and $v s_1$ (or both $r_7 p$ and $s_7 q$) in a non-clashing matching, it suffices to rule out the case where exactly one of $w r_1$ and $v s_1$ and exactly one of $r_7 p$ and $s_7 q$ belong to a perfect matching. Suppose $r_7 p$ (but not $s_7 q$) is chosen. Then the matching is forced to include $r_5 r_6$ and $s_6 s_7$ (in order to saturate $r_6$ and $s_7$). This in turn forces the choice of $r_3 r_4$ and $s_4 s_5$ (in order to saturate $r_4$ and $s_5$), creating a clash. By symmetry, it follows that none of the horizontal dashed edges can be used in a perfect non-clashing matching.

It remains to argue that (i) if a non-clashing matching contains edge $pq$ then edge $vw$ cannot belong (and vice versa); (ii) there is a non-clashing matching of the connector gadget that contains edge $pq$ but leaves both $v$ and $w$ exposed (and vice versa); and (iii) there is a non-clashing matching of the connector gadget that  leaves all of  $v$, $w$, $p$ and $q$ exposed. For (i), we observe that, by chained forcing as above, the inclusion of $pq$ forces the inclusion of $r_1s_1$ (and, by symmetry, the inclusion of $vw$ forces the inclusion of $r_7s_7$). Properties (ii) and (iii) are illustrated in Figure~\ref{fig:ConnectorMatchings}.

\begin{figure}[htbp]
\centerline{\includegraphics[scale=0.6]{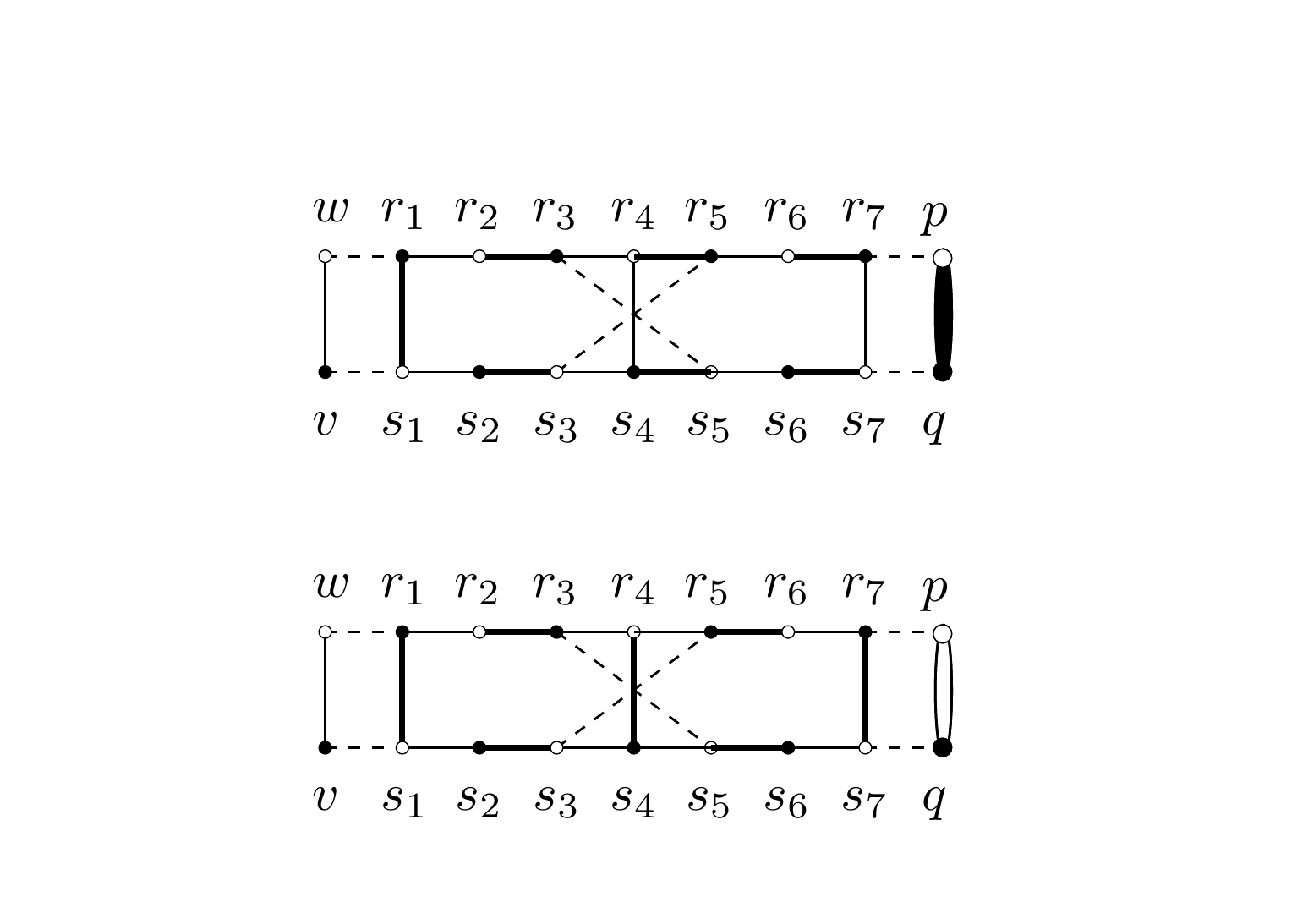}}
\caption{ConnectorMatchings}
\label{fig:ConnectorMatchings}
\end{figure}


\subsection{An efficient algorithm for Non-Clashing Bipartite Matching, when the maximum degree on either part is at most two}

Suppose we are given a bipartite graph $B$ whose vertices are either black or white, and all edges join a black vertex to a white vertex. We want to determine if  
$B$ admits a matching $M$ such that (i) $M$ saturates all of the black vertices, and (ii) no two edges of $M$ are part of a $4$-cycle in $B$.


Suppose further that the vertices on one of the two parts of $B$ all have degree at most two. We can assume, without loss of generality that they all have
degree exactly two, since edges with an endpoint of degree one can be (incrementally) included in a maximum matching $M$ without risk of being part of a 
$4$-cycle in $B$.  

We say that a pair of vertices in this degree-bounded part are \emph{twins} if they have the same adjacent vertices. We can assume that $B$ has no twins since (i) twins cannot both be saturated without producing a forbidden $4$-cycle, and therefore (ii) the existence of black twins immediately precludes a non-clashing matching, and (iii) any pair of white twins can be replaced by a single copy of the twinned vertex.

With this simplification it is easy to confirm that any matching that saturates the black vertices, the existence of which can be determined in polynomial time, must be non-clashing,

\end{document}